\newcommand\ie{\emph{i.e.}}
\newcommand\eg{\emph{e.g.}}
\newtheorem{theorem}{Theorem}
\newtheorem{lemma}{Lemma}
\title{Towards Understanding Deep Learning from Noisy Labels\\ with Small-Loss Criterion}
\author{
Xian-Jin Gui\and
Wei Wang\footnote{Corresponding author.}\And
Zhang-Hao Tian\\
\affiliations
National Key Laboratory for Novel Software Technology\\
Nanjing University, Nanjing 210023, China\\
\emails
\{guixj, wangw, tianzh\}@lamda.nju.edu.cn
}
\begin{document}
\maketitle

\begin{abstract}
Deep neural networks need large amounts of labeled data to achieve good performance. In real-world applications, labels are usually collected from non-experts such as crowdsourcing to save cost and thus are noisy. In the past few years, deep learning methods for dealing with noisy labels have been developed, many of which are based on the small-loss criterion. However, there are few theoretical analyses to explain why these methods could learn well from noisy labels. In this paper, we theoretically explain why the widely-used small-loss criterion works. Based on the explanation, we reformalize the vanilla small-loss criterion to better tackle noisy labels.
The experimental results verify our theoretical explanation and also demonstrate the effectiveness of the reformalization.
\end{abstract}

\section{Introduction}
Deep neural networks (DNNs) have achieved great success in many real-world applications, but rely on large-scale data with accurate labels~\cite{deng2009imagenet}. Obtaining large-scale accurate labels is expensive while the alternative methods such as crowdsourcing~\cite{raykar2010learning} and web queries~\cite{jiang2020beyond} can easily provide extensive labeled data, but unavoidably incur noisy labels. The performance of deep neural networks may be severely hurt if these noisy labels are blindly used~\cite{zhang2017understanding}, and thus how to learn with noisy labels has become a hot topic.

In the past few years, many deep learning methods for tackling noisy labels have been developed. Some methods try to exploit noise-robust loss functions, \eg, MAE loss~\cite{ghosh2017robust}, Truncated $\mathcal{L}_q$ loss~\cite{zhang2018generalized} and the information-theoretic loss~\cite{xu2019l_dmi}. These methods do not consider the specific information about label noise, and thus usually have limited utility in real-world applications. Some methods use the transition matrix to model label noise and construct an unbiased loss term to alleviate the influence of noisy labels~\cite{sukhbaatar2014training,patrini2017making,goldberger2017training,han2018masking,hendrycks2018using}. However, the performance of these methods is usually suboptimal due to the difficulty of accurately estimating the noise transition matrix. Some other methods try to correct the noisy labels~\cite{ma2018dimensionality,arazo2019unsupervised,yi2019probabilistic}, but may suffer from the false correction. Sometimes, although correcting the noisy labels might be challenging especially for the classification task with a large number of classes, the detection of noisy labels is relatively easy. Along this direction, the sample selection strategy with the widely-used small-loss criterion has been proposed, \ie, treating the examples with small loss as the clean data and using them in the training process. Although many methods based on the small-loss criterion have achieved prominent performance in practice~\cite{han2018co,yu2019does,shen2019learning,song2019selfie,wei2020combating}, the theoretical explanation about when and why it works is rarely studied. 

When there are noisy labels in the data, it is somehow overly optimistic to expect that deep neural networks could achieve good performance without any assumption on label noise. Thus, most of previous studies potentially make assumptions on label noise, \eg, the condition that correct labels are not overwhelmed by the false ones~\cite{sukhbaatar2014training,han2018co}. Some methods focus on the class-conditional noise setting~\cite{natarajan2013learning,patrini2017making}, \ie, the label noise class-conditionally depends only on the latent true class, but not on the feature. This assumption is an approximation of real-world label noise and can encode the similarity information between classes. 
Based on this, three representative types of label noise have been considered, \ie, uniform label noise~\cite{hendrycks2018using}, pairwise label noise~\cite{han2018co} and structured label noise~\cite{patrini2017making,zhang2018generalized}. For these types of label noise, it is usually assumed that the diagonally-dominant condition holds, and many methods could achieve good performance with this condition~\cite{rolnick2017deep,wei2020combating}. Unfortunately, there are few theoretical analyses to explain why this diagonally-dominant condition is necessary for good performance. 
In this work, we first reveal the theoretical condition under which learning methods could achieve good performance with noisy labels, which exactly matches the condition assumed in previous methods, 
and then theoretically explain when and why the small-loss criterion works. Based on the explanation, we reformalize the vanilla small-loss criterion to better tackle noisy labels.
The experimental results on synthetic and real-world datasets verify our theoretical results and demonstrate the effectiveness of the reformalization of small-loss criterion.

\section{Related Work} 
There are many existing methods for learning from noisy labels~\cite{algan2019image}, we only briefly introduce the most related ones herein. 

For tackling noisy labels, the methods based on robust loss functions have been proposed. \cite{ghosh2017robust} proved Mean Absolute Error (MAE) is more robust to label noise than Cross-Entropy Error (CCE), but MAE is hard to optimize due to gradient saturation issues. Later, \cite{zhang2018generalized} constructed Truncated $\mathcal{L}_q$ loss by combining MAE and CCE. 
Recently, \cite{xu2019l_dmi} proposed the information-theoretic loss $\mathcal{L}_{\text{DMI}}$ to tackle noisy labels. Although these methods could alleviate the influence of label noise to some extent, they do not take the information of label noise into consideration. Some methods try to first estimate the noise transition matrix and then use it to correct the loss term~\cite{patrini2017making,hendrycks2018using}, while a few others add a noise layer into the network to implicitly simulate the noise transition  
process~\cite{sukhbaatar2014training,goldberger2017training}. When there exists noise in the labels, it is beneficial if we could correct some labels. 
\cite{tanaka2018joint} and \cite{yi2019probabilistic} tried to correct noisy labels by jointly optimizing the model parameters and noisy labels, while~\cite{ma2018dimensionality} and~\cite{arazo2019unsupervised} used the convex combination of noisy labels and model's predictions as training targets to reduce noisy labels' influence. Obviously, some false corrections may be incurred in the label correction process. To exempt from false correction, some methods try to select a part of low-risk data based on the small-loss criterion. 
Co-teaching~\cite{han2018co} trains two networks simultaneously and update each network on the data selected by the other with the small-loss criterion. Later, Co-teaching+~\cite{yu2019does} improves Co-teaching by maintaining the divergence between the two networks. INCV~\cite{chen2019understanding} first splits the noisy dataset into a selected set, a candidate set and a removed set based on validation loss, then exploits Co-teaching strategy to learn. \cite{wei2020combating} further claimed that the agreement is important in the learning process and proposed the method JoCoR which combines Co-teaching with Co-regularization.

Although these methods have achieved prominent performance, there are only few studies to explain the underlying mechanism of deep learning with noisy labels. 
Some work tries to demystify the intriguing memorization phenomenon~\cite{arpit2017a}, \ie, DNNs first fit correct labels and then start to overfit incorrect labels.
\cite{li2020gradient} theoretically proved that under a cluster assumption on data for least-square regression tasks, a one-hidden layer neural network trained with gradient descent will exhibit this phenomenon. \cite{liu2020early} provided a theoretical characterization of memorization phenomenon for a high-dimensional linear generative model under binary classification task. 
Some work focuses on how regularization helps deep learning with noisy labels. \cite{hu2020simple} proposed two simple regularization methods and proved that gradient descent training with either of them by using noisy labels enjoys a good generalization guarantee.

\section{Preliminaries}
We focus on the classification task in this paper. Let $\mathcal{X}$ denote the instance space, for each $\bm x\in\mathcal{X}$, there exists a true label $y\in\mathcal{Y}=\{1,\dots,c\}$ determined by the target concept $f^{*}$, \ie, $y=f^{*}(\bm x)$, and $c$ is the number of classes. In real-world applications, for an instance $\bm x$, the observed label $\tilde{y}$  may be corrupted. 
In previous studies, the class-conditional noise assumption (\ie, $p(\tilde{y}|y, \bm x) = p(\tilde{y}|y)$) is popularly used~\cite{ghosh2017robust,patrini2017making,sukhbaatar2014training,xia2019are,xu2019l_dmi}. 
The label corruption can be described by a noise transition matrix $T\in \mathbb{R}^{c\times c}$, where $T_{ij} = p(\tilde{y}=j|y=i)$ denotes the probability of an $i$-th class example flipped into the $j$-th class, and the noisy data distribution satisfies $p(\bm x,\tilde y)=\sum_{i=1}^cp(\tilde{y}|y=i)p(\bm x,y=i)$.

Usually, we have a training dataset $\tilde{D}=\{(\bm x_i, \tilde{y}_i)\}_{i=1}^n$ with noisy labels. We consider the deep neural network $g(\bm x; \Theta):\mathcal{X} \to \mathbb{R}^c$ with output $g(\bm x;\Theta) = [\hat{p}_1(\bm x), \dots, \hat{p}_c(\bm x)]^\top\in\mathbb{R}^c$, \[\hat{p}_i(\bm x) = \frac{\exp\big(\bm{w}_i^\top \phi(\bm x;\bm{\theta})\big)}{\sum_{j=1}^c\exp\big(\bm{w}_{j}^\top \phi(\bm x;\bm{\theta})\big)},\] where $\bm{w}_i$ is the weight of the softmax classifier for the $i$-th class (the bias term is omitted for brevity), $\phi(\bm x;\bm{\theta})$ denotes the output of the penultimate layer of the deep neural network, and $\Theta = vec(\bm\theta, \{\bm{w}_i\}_{i=1}^{c})$ denotes the vectorization of all model parameters. In essence, $\phi(\cdot;\bm{\theta})$ can be regarded as a feature extractor which yields a new representation $\phi(\bm x;\bm{\theta})$ for each input $\bm x$. Sometimes, we omit the parameter $\Theta$ and denote $g(\bm x;\Theta)$ as $g(\bm x)$ for brevity. The classifier induced by $g$ is $f_g(\bm x)  = \mathop{\arg\max}_{i\in \{1,\dots,c\}}\, \hat{p}_i(\bm x) \in \mathcal{Y}$. For an example $(\bm x, \tilde{y})$, the loss is calculated as $\ell(g(\bm x), \tilde{y})$ with a given loss function $\ell(\cdot, \cdot)$. The empirical loss of $g$ on $\tilde{D}$ is $\frac{1}{n}\sum_{i=1}^n\ell(g(\bm x_i), \tilde{y}_i)$ and the expected loss on noisy data is $\mathbb E_{(\bm x,\tilde{y})}[\ell(g(\bm x), \tilde{y})]$.

\section{Our Work}

The goal of learning from noisy labels is usually to obtain a classifier that performs as well as possible on a clean test set, and even it is expected to learn the target concept $f^*$.
It is overly optimistic to expect to learn a good model without any assumption on the noise transition matrix $T$ since it characterizes the distribution of label noise. Many previous methods made assumptions on $T$, 
\eg, $T$ is a diagonally dominant matrix~\cite{han2018co,yu2019does}. 
This assumption originally comes from the empirical study, and why it is necessary for achieving good performance is still unclear.
Considering the classification function $f:\mathcal{X}\to\mathcal{Y}$, the 0-1 loss on $(\bm x, \tilde{y})$ is $\ell_{01}(f(\bm x), \tilde{y}) = \mathbb I [f(\bm x)\neq \tilde{y}]$. The empirical loss of $f$ on the noisy dataset $\tilde{D}$ is $\frac{1}{n}\sum_{i=1}^n\ell_{01}(f(\bm x_i), \tilde{y}_i)$ and the expected loss is $\mathbb E_{(\bm x,\tilde{y})}[\ell_{01}(f(\bm x), \tilde{y})]$. 
In the following part, we theoretically explain why the diagonally-dominant condition is important in the learning process.\footnote{Due to space limit, all proofs of the lemmas and theorems can be found in Appendix~A.}
\begin{lemma}\label{lemma:1988}
If $T$ satisfies the row-diagonally dominant condition $T_{ii}>\max_{j\neq i}T_{ij}$, $\forall i$, then the target concept $f^*$ has the minimum expected 0-1 loss on the noisy data, i.e., $\forall \, f\neq f^*$, $\mathbb E_{(\bm x,\tilde{y})}[\ell_{01}(f^*(\bm x), \tilde{y})] \le \mathbb E_{(\bm x,\tilde{y})}[\ell_{01}(f(\bm x), \tilde{y})]$.
\end{lemma}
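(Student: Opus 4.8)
The plan is to reduce the expected $0$-$1$ loss to a pointwise optimization problem and then invoke the row-diagonally dominant condition at each instance. First I would rewrite the expected loss on noisy data as a misclassification probability. Since $\ell_{01}(f(\bm x),\tilde y)=\mathbb I[f(\bm x)\neq\tilde y]$, conditioning on $\bm x$ and summing over the noisy label gives
\begin{equation*}
\mathbb E_{(\bm x,\tilde y)}[\ell_{01}(f(\bm x),\tilde y)]=\int_{\mathcal X}\Big(1-p(\tilde y=f(\bm x)\mid\bm x)\Big)p(\bm x)\,d\bm x,
\end{equation*}
so minimizing the expected loss is equivalent to maximizing $p(\tilde y=f(\bm x)\mid\bm x)$ separately at (almost) every $\bm x$. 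This decoupling is what turns a single global optimality claim into a family of independent per-instance comparisons.

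Next I would compute the conditional distribution of the noisy label. Using the class-conditional noise assumption $p(\tilde y\mid y,\bm x)=p(\tilde y\mid y)$ together with the fact that the true label is determined by $f^*$, \ie\ $p(y=i\mid\bm x)=\mathbb I[i=f^*(\bm x)]$, Bayes' rule yields
\begin{equation*}
p(\tilde y=k\mid\bm x)=\sum_{i=1}^c T_{ik}\,p(y=i\mid\bm x)=T_{f^*(\bm x),k}.
\end{equation*}
Thus, at a fixed $\bm x$ with true label $y^*=f^*(\bm x)$, the per-instance objective $p(\tilde y=k\mid\bm x)$ is exactly the $y^*$-th row of the transition matrix $T$, read off at column $k$.

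Finally I would apply the hypothesis. The row-diagonally dominant condition $T_{ii}>\max_{j\neq i}T_{ij}$ says precisely that, within each row $y^*$, the diagonal entry strictly dominates every off-diagonal one, so $\arg\max_k T_{y^*,k}=y^*=f^*(\bm x)$. Hence $f^*$ is the pointwise maximizer of $p(\tilde y=f(\bm x)\mid\bm x)$ at every $\bm x$, and integrating this pointwise inequality over $\bm x$ against $p(\bm x)$ delivers $\mathbb E[\ell_{01}(f^*(\bm x),\tilde y)]\le\mathbb E[\ell_{01}(f(\bm x),\tilde y)]$ for all $f$. The main obstacle, and the only place where the structure of the problem really bites, is the second step: one must recognize that the deterministic-label assumption collapses the mixture $\sum_i T_{ik}\,p(y=i\mid\bm x)$ to a single row of $T$. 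Were the true label instead genuinely stochastic, the comparison would involve convex combinations of several rows, and plain diagonal dominance would no longer be enough on its own to single out $f^*$.
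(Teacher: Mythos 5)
Your proposal is correct and follows essentially the same route as the paper's proof: the same reduction of the expected 0-1 loss to the pointwise quantity $1-p(\tilde y=f(\bm x)\mid\bm x)$, the same collapse of $\sum_i T_{ik}\,p(y=i\mid\bm x)$ to the single row entry $T_{f^*(\bm x)k}$ via the class-conditional noise assumption and deterministic labels, and the same pointwise application of row-diagonal dominance followed by integration. The only (minor) addition in the paper is a closing remark that the inequality is strict whenever $f$ differs from $f^*$ on a set of positive measure.
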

Lemma~\ref{lemma:1988} indicates that when the row-diagonally dominant condition is met, the target concept classifier $f^*$ has the minimum expected 0-1 loss on the noisy data, and can be learned with Empirical Risk Minimization (ERM) methods. 
Since 0-1 loss is difficult to optimize, the convex surrogate loss is usually used in practice, \eg, the cross-entropy loss $\ell_{CE}(g(\bm x;\Theta), \tilde{y}) = -\log(\hat{p}_{\tilde y}(\bm x))$. 
For deep neural networks with cross-entropy loss,
the learning process is to find the model $g^* = g(\bm x;\Theta^*)$
minimizing the expected loss, \ie,
\begin{equation}\label{eq:optimal-para}
\Theta^*=\mathop{\arg\min}_{\Theta}\mathbb E_{(\bm x,\tilde{y})}[\ell_{CE}(g(\bm x;\Theta), \tilde{y})].
\end{equation}

\begin{lemma}\label{lemma:row-dominant}
Let $g^*$ denote the deep neural network minimizing the cross-entropy loss in Eq.~\eqref{eq:optimal-para}, the induced classifier $f_{g^*}$ satisfies $f_{g^*}(\bm x)=y$, $\forall \bm x\in\mathcal{X}$, if and only if $T$ satisfies the row-diagonally dominant condition $T_{ii}> \max_{j\neq i}T_{ij}$, $\forall i$.
\end{lemma}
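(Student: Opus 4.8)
The plan is to reduce the minimization over network parameters to a pointwise minimization over the output simplex at each $\bm x$, identify the Bayes-optimal conditional distribution as the unique minimizer, and then simply read off the prediction of the induced classifier. First I would rewrite the expected cross-entropy loss by conditioning on $\bm x$:
\[
\mathbb{E}_{(\bm x,\tilde{y})}[\ell_{CE}(g(\bm x;\Theta),\tilde{y})]
= \mathbb{E}_{\bm x}\Big[-\textstyle\sum_{j=1}^c p(\tilde{y}=j\mid\bm x)\log\hat{p}_j(\bm x)\Big].
\]
Since the integrand at each $\bm x$ depends only on the output vector $[\hat{p}_1(\bm x),\dots,\hat{p}_c(\bm x)]^\top$, and (under a sufficient-capacity assumption) these outputs may be chosen independently across instances, minimizing the expectation reduces to minimizing $-\sum_j p(\tilde{y}=j\mid\bm x)\log\hat{p}_j(\bm x)$ over the probability simplex for each $\bm x$ separately. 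By Gibbs' inequality, or equivalently a short Lagrange-multiplier computation, this pointwise objective is uniquely minimized by matching the conditional distribution, so the optimal network satisfies $\hat{p}_j^*(\bm x)=p(\tilde{y}=j\mid\bm x)$.

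Next I would evaluate this conditional under the class-conditional noise model. Because the true label is deterministic, $y=f^*(\bm x)$, the class-conditional assumption $p(\tilde{y}\mid y,\bm x)=p(\tilde{y}\mid y)$ gives $p(\tilde{y}=j\mid\bm x)=p(\tilde{y}=j\mid y=f^*(\bm x))=T_{yj}$. Therefore the induced classifier is
\[
f_{g^*}(\bm x)=\mathop{\arg\max}_{j}\,\hat{p}_j^*(\bm x)=\mathop{\arg\max}_{j}\,T_{yj}.
\]
Both directions of the equivalence then follow directly. If $T$ is row-diagonally dominant, then for $y=f^*(\bm x)$ we have $T_{yy}>\max_{j\neq y}T_{yj}$, so $y$ is the strict and hence unique maximizer of $T_{y\cdot}$ and $f_{g^*}(\bm x)=y$ for every $\bm x$. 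Conversely, if $f_{g^*}(\bm x)=y$ for all $\bm x$, then for each class $i$ that is realized by some instance we must have $\mathop{\arg\max}_{j}T_{ij}=i$, which forces $T_{ii}>\max_{j\neq i}T_{ij}$, ranging over $i$ to recover the full condition.

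The main obstacle I anticipate is rigorously justifying the first step, namely that the minimizer over the network parameters $\Theta$ coincides with the pointwise Bayes-optimal predictor; this rests on a universal-approximation/over-parameterization assumption guaranteeing that $g$ can realize the conditional distribution $p(\tilde{y}=\cdot\mid\bm x)$, which I would state explicitly (noting also that softmax outputs lie in the open simplex, so the infimum is approached while the \emph{ordering} of the components, which alone determines the classifier, is exactly matched). A secondary technical point is tie-breaking in the $\arg\max$: the \emph{strict} inequality in the row-diagonal dominance condition is precisely what yields a unique maximizer, so the converse direction requires ruling out the boundary case $T_{ii}=\max_{j\neq i}T_{ij}$, where the classifier cannot be guaranteed to return $y$; I would also assume every class carries positive marginal probability so that the dominance condition is forced on every row $i$.
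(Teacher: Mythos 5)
Your proposal follows essentially the same route as the paper's proof: rewrite the expected cross-entropy as an integral over $\bm x$, reduce to a pointwise constrained minimization over the simplex solved by Lagrange multipliers (giving $\hat{p}_j(\bm x)=p(\tilde{y}=j\mid\bm x)$), invoke the class-conditional noise assumption and the deterministic target $y=f^*(\bm x)$ to obtain $\hat{p}_j(\bm x)=T_{f^*(\bm x)j}$, and read the row-dominance equivalence off the $\arg\max$. The caveats you flag---sufficient capacity to realize the pointwise minimizer independently across instances, strictness for tie-breaking, and positive class marginals for the converse---are genuine but are left implicit in the paper's proof, so they constitute a more careful rendering of the same argument rather than a different one.
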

Lemma~\ref{lemma:row-dominant} indicates the condition for learning the optimal classifier from noisy data.
This result matches the assumption potentially made by previous methods and explains why they could achieve good performance, \eg, \cite{rolnick2017deep} empirically showed that deep neural networks are immune to some kinds of label noise. 

Although Lemmas~\ref{lemma:1988} and~\ref{lemma:row-dominant} theoretically show that good classifiers could be learned from noisy data when some certain condition is met, it is difficult to obtain them by ERM methods since in practice we only have a given noisy dataset with finite examples and deep neural networks can even memorize these finite noisy examples due to over-parameterization. In real-world applications, many methods resort to sample selection strategy with the small-loss criterion to deal with noisy labels. This  criterion can be summarized as the following process: for a warmed-up neural network $g$, it first selects the examples with small loss and then update the model parameter with these small-loss examples~\cite{han2018co,yu2019does,song2019selfie,wei2020combating}.
However, the theoretical explanation about why this small-loss criterion works has not been touched. Now we provide a theoretical explanation for this.

\begin{theorem}\label{thm:the-small-loss}
Let $g^*$ denote the deep neural network minimizing the cross-entropy loss in Eq.~\eqref{eq:optimal-para}, $(\bm x_1,\tilde{y})$ and $(\bm x_2,\tilde{y})$ are any two examples with the same observed label $\tilde{y}$ in $\tilde{D}$ satisfying that $f^*(\bm x_1)=\tilde{y}$ and $f^*(\bm x_2)\neq \tilde{y}$, 
if $T$ satisfies the diagonally-dominant condition $T_{ii} > \max\,\{\max_{j \neq i}T_{ij}, \,\,\max_{j\neq i}T_{ji}\}$, $\forall i$, then $\ell_{CE}(g^*(\bm x_1), \tilde{y})<\ell_{CE}(g^*(\bm x_2), \tilde{y})$. 

\end{theorem}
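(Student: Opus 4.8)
The plan is to reduce the loss inequality to a comparison of two entries of the transition matrix $T$. Since the cross-entropy loss $\ell_{CE}(g^*(\bm x),\tilde y)=-\log\hat p_{\tilde y}(\bm x)$ is strictly decreasing in the predicted probability $\hat p_{\tilde y}(\bm x)$, the claim $\ell_{CE}(g^*(\bm x_1),\tilde y)<\ell_{CE}(g^*(\bm x_2),\tilde y)$ is equivalent to $\hat p_{\tilde y}(\bm x_1)>\hat p_{\tilde y}(\bm x_2)$. So it suffices to identify the probability vector produced by the optimizer $g^*$ and compare its $\tilde y$-th coordinate at $\bm x_1$ and $\bm x_2$.

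First I would characterize $g^*$. Writing the expected loss as $\mathbb E_{\bm x}\big[-\sum_{i=1}^c p(\tilde y=i\mid\bm x)\log\hat p_i(\bm x)\big]$ and minimizing the inner term pointwise over the probability simplex (via Gibbs' inequality, or equivalently Lagrange multipliers), the minimizer satisfies $\hat p_i^*(\bm x)=p(\tilde y=i\mid\bm x)$ for every $\bm x$, under the standard assumption that the network is expressive enough to realize this conditional distribution. Because the true label is deterministic, $p(y=k\mid\bm x)=\mathbb I[f^*(\bm x)=k]$, and the class-conditional noise model then gives
\[
\hat p_i^*(\bm x)=p(\tilde y=i\mid\bm x)=\sum_{k=1}^c T_{ki}\,p(y=k\mid\bm x)=T_{f^*(\bm x),\,i}.
\]
This is exactly the characterization that underlies Lemma~\ref{lemma:row-dominant}, where $f_{g^*}(\bm x)=\arg\max_i T_{f^*(\bm x),i}$ recovers $y$ precisely under the row-dominant condition, so I would reuse it rather than re-derive it.

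Next I would substitute the two hypotheses on the true labels. Since $f^*(\bm x_1)=\tilde y$ we get $\hat p_{\tilde y}^*(\bm x_1)=T_{\tilde y,\tilde y}$; and writing $y_2:=f^*(\bm x_2)\neq\tilde y$ we get $\hat p_{\tilde y}^*(\bm x_2)=T_{y_2,\tilde y}$. The desired inequality $\hat p_{\tilde y}^*(\bm x_1)>\hat p_{\tilde y}^*(\bm x_2)$ is therefore $T_{\tilde y,\tilde y}>T_{y_2,\tilde y}$, which is exactly the column-dominant half of the hypothesis: taking $i=\tilde y$ in $T_{ii}>\max_{j\neq i}T_{ji}$ yields $T_{\tilde y,\tilde y}>\max_{j\neq\tilde y}T_{j,\tilde y}\ge T_{y_2,\tilde y}$ since $y_2\neq\tilde y$. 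Chaining this strict inequality back through the strict monotonicity of $-\log$ completes the argument.

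The main obstacle I anticipate is justifying the pointwise-optimal characterization $\hat p_i^*(\bm x)=p(\tilde y=i\mid\bm x)$ rigorously: the softmax outputs at distinct inputs are coupled through the shared parameters $\Theta$, so pointwise minimization is only legitimate under an expressiveness/realizability assumption that the family $\{g(\cdot;\Theta)\}$ contains (or approximates arbitrarily well) the posterior $p(\tilde y\mid\cdot)$. I would make this assumption explicit and note that it is already implicit in Eq.~\eqref{eq:optimal-para} and in the proof of Lemma~\ref{lemma:row-dominant}. It is also worth remarking that only the column-dominant part of the full hypothesis is used here; the row-dominant part guarantees, via Lemma~\ref{lemma:row-dominant}, that $\bm x_1$ is correctly classified, which is the complementary fact that makes the small-loss criterion meaningful.
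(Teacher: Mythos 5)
Your proposal is correct and follows essentially the same route as the paper's proof: both characterize the optimizer pointwise as $\hat p_i^*(\bm x)=p(\tilde y=i\mid\bm x)=T_{f^*(\bm x)i}$ by reusing the argument of Lemma~\ref{lemma:row-dominant}, substitute to get $-\log(T_{\tilde y\tilde y})$ versus $-\log(T_{f^*(\bm x_2)\tilde y})$, and conclude via the column-dominance part $T_{\tilde y\tilde y}>\max_{j\neq\tilde y}T_{j\tilde y}$. Your explicit remarks that only the column half of the hypothesis drives the inequality, and that the pointwise characterization rests on an implicit realizability assumption, are accurate refinements of the same argument rather than a different approach.
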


Theorem~\ref{thm:the-small-loss} indicates that when the noise transition matrix $T$ satisfies the diagonally-dominant condition, for the neural network $g^*$, considering the examples with the same observed label, the correct examples have smaller loss than the incorrect ones. Here $g^*$ is the neural network minimizing the expected cross-entropy loss on the noisy data. While  in practice, we may only obtain a neural network $g$ which is trained relatively well enough but not necessary to be $g^*$.
Suppose $g$ is $\epsilon$-close to $g^*$, 
\ie, $\|g-g^*\|_\infty := \sup_{\bm x\in\mathcal{X}}\|g(\bm x)-g^*(\bm x)\|_\infty = \epsilon$, we analyze whether the small-loss criterion still can hold.

\begin{theorem}\label{thm:weak-small-loss}
Suppose $g$ is $\epsilon$-close to $g^*$, i.e., $\|g-g^*\|_\infty = \epsilon$, for two examples $(\bm x_1, \tilde{y})$ and $(\bm x_2, \tilde{y})$, assume $f^*(\bm x_1)=\tilde{y}$ and $f^*(\bm x_2)\neq \tilde{y}$, if $T$ satisfies the diagonally-dominant condition $T_{ii} > \max\,\{\max_{j \neq i}T_{ij}, \,\,\max_{j\neq i}T_{ji}\}$, $\forall i$, and $\epsilon < \frac{1}{2}\cdot(T_{\tilde{y}\tilde{y}}-T_{f^*(\bm x_2)\tilde{y}})$, then $\ell_{CE}(g(\bm x_1), \tilde{y}) < \ell_{CE}(g(\bm x_2), \tilde{y})$. 
\end{theorem}

Theorem~\ref{thm:weak-small-loss} indicates that if $g$ is not far away from $g^*$, for examples with the same observed label, the correct examples still have smaller loss than the incorrect ones. 
In practice, when trained with finite examples,
after a warm-up stage for model $g$, 
the condition in Theorem~\ref{thm:weak-small-loss} may not hold for all $(\bm{x}_i, \tilde{y}_i)\in\tilde{D}$.
It usually holds for a part of examples and we can select a small part of clean data with the small-loss criterion.
This provides an explanation for the effectiveness of existing methods with the small-loss criterion, since the noise setting that they considered satisfies the diagonally-dominant condition. We will show by experiments when this condition does not hold, the small-loss criterion may fail.

With the above discussions, the small-loss criterion can be used to separate correct examples from incorrect ones. 
While according to the theoretical explanation, it would be better to select examples class by class by comparing the loss of examples with the same observed labels.
Actually, we will show by experiments that the loss of examples with different noisy labels may be not comparable.
Besides, the loss of examples may fluctuate in different epochs since in practice DNNs are optimized by stochastic gradient descent.
Thus we propose to reformalize the small-loss criterion as follow: we use the \emph{mean loss} of each example along the training process and select the examples with small mean loss \emph{class by class}.
The overall process of the proposed reformalization of small-loss criterion (\textbf{RSL}) is shown in Algorithm~\ref{alg:selection}. 

The selected number of examples in Algorithm~\ref{alg:selection} is an important parameter that needs to be set carefully. For the $i$-th class with noise rate $\eta_i$, it is reasonable that the selected proportion $\textit{prop}(i)$ is a little less than $1-\eta_i$. Considering $\eta_i$ may be larger than $0.5$, we propose to set $\textit{prop}(i) = \max \{1-(1+\beta)\eta_i,\,\, (1-\beta)(1-\eta_i)\}$, where $0\le\beta\le1$ is a parameter which can be set as 0.2.
However, in real-world applications especially for the \emph{structured label noise}, there may exist some classes with low noise rates but others with high ones, \eg, $\eta_i\ll \eta_j$ for some classes $i$, $j$.  Let $[p_1, \dots, p_c]$ denote the true class distribution, $[n_1, \dots, n_c]$ denotes the number of examples of each class in the noisy dataset.
Directly setting the selected number as $\textit{prop}(i)\cdot n_i$ may cause \emph{class distribution shift}, \ie, the distribution of $[\textit{prop}(1)\cdot n_1, \dots, \textit{prop}(c)\cdot n_c]$ seriously deviates from $[p_1, \dots, p_c]$.
To obey the true class distribution, let $m$ denote the total number of the potential selected data, we can set the selected number as $[p_1\cdot m,\dots, p_c\cdot m]$. Given the constraints $p_i \cdot m \le prop(i)\cdot n_i$, we can set $m=\min_{1\le i\le c}\{\frac{\textit{prop}(i)\cdot n_i}{p_i}\}$. However, since $p_i\cdot m$ may be much less than $\textit{prop}(i)\cdot n_i$ for some classes, many useful data may be wasted. Thus we additionally introduce a parameter $\gamma$ to achieve the trade-off between the class distribution unbiasedness and the full use of noisy data. The final selected number for the $i$-th class is set as
\begin{equation}\label{eq:number}
\textit{num}(i) = \min \{\gamma\cdot p_i\times m, \,\, \textit{prop}(i)\times n_i\}.
\end{equation}
Denote $\gamma_0 =1$ and $\gamma_1=\max_{1\le i\le c}\{\frac{\textit{prop}(i)\cdot n_i}{p_i\cdot m} \}$, if $\gamma = \gamma_0$, then $\textit{num}(i) = p_i\times m$; 
if $\gamma\ge\gamma_1$, then $\textit{num}(i)$ collapses to $\textit{prop}(i)\times n_i$. In practice, setting $\gamma = (\gamma_0+\gamma_1)/2$ may be a reasonable choice.

\begin{algorithm}[t]
	\renewcommand{\algorithmicrequire}{\textbf{Input:}}
	\renewcommand{\algorithmicensure}{\textbf{Output:}}
	\caption{RSL: Reformalization of Small-Loss criterion}
	\label{alg:selection}
	\begin{algorithmic}[1]
		\REQUIRE Noisy dataset $\tilde{D}$, 
		the initial model $g(\bm x;\Theta^{(0)})$, epoch limit $E$
		\FOR{$t=1$, $\dots$, $E$}
		     \STATE Update ${\Theta}^{(t-1)}$ on $\tilde{D}$ with one epoch to get ${\Theta}^{(t)}$; 
		     \STATE Calculate each example’s loss:
		       \STATE\quad $\forall (\bm x,\tilde{y})\in\tilde{D}$,\, $\ell_t(\bm x,\tilde{y})=\ell_{CE}( g(\bm x;{\Theta}^{(t)}),\tilde{y})$;
		\ENDFOR
		\STATE Calculate each example’s  mean loss:
		\STATE\quad $\forall (\bm x,\tilde{y})\in\tilde{D}$,\, $\bar{\ell}(\bm x,\tilde{y}) = \frac{1}{E}\sum_{t=1}^E \ell_t(\bm x,\tilde{y})$; 
		\FOR{$i=1$, $\dots$, $c$}
		\STATE $\tilde{D}_i = \{(\bm x, \tilde{y})\in \tilde{D}|\tilde{y}=i\}$;
		\STATE Rank examples in $\tilde{D}_i$  by $\bar{\ell}(\bm x,\tilde{y})$;
		\STATE Calculate $\textit{num}(i)$ according to Eq.~\eqref{eq:number};
		\STATE Select $\textit{num}(i)$ examples with smallest  $\bar{\ell}(\bm x,\tilde{y})$ as $S_i$; 
                \ENDFOR
                \STATE $D_\text{sel} = \cup_{i=1}^cS_i$;
		\STATE Train $g(\bm x; \Theta)$ with $D_\text{sel}$;
		\ENSURE The final classifier $g(\bm x; \Theta)$ 
	\end{algorithmic}  
\end{algorithm}

Based on the small-loss criterion, we can select a part of low-risk data from noisy data. Many previous methods only use the selected data and directly abandon the unselected data, but additional benefits could be obtained by treating them as unlabeled data and utilizing some semi-supervised learning methods. 
We utilize a general and representative semi-supervised learning method MixMatch~\cite{berthelot2019mixmatch} to verify the potentiality of this framework~\cite{wang2020seminll}.
Due to space limit, we put the technical details of standard MixMatch method $(\mathcal{L}', \mathcal{U}') = \text{MixMatch}(\mathcal{L}, \mathcal{U})$ in Appendix~C while emphasizing the adaptation we made for applying it in our case herein.
Note that the selected examples (denoted by $D_\text{sel}$ in Algorithm~\ref{alg:selection}) may still have a little label noise.
We propose to reweigh the selected examples to alleviate the influence of label noise.
Let $S_i$ denote the set of selected examples for the $i$-th class, $\bar{\ell}(\bm x,\tilde{y})$ denotes the mean loss of $(\bm x, \tilde{y})$, $\ell_*(i)=\min_{(\bm x,\tilde{y})\in S_i} \bar{\ell}(\bm x,\tilde{y})$ and $\ell^*(i)=\max_{(\bm x,\tilde{y})\in S_i} \bar{\ell}(\bm x,\tilde{y})$, for each $(\bm x, \tilde{y})\in S_i$, we set its weight as 
$ w(\bm x, \tilde{y}) = \exp(-\kappa\frac{\bar{\ell}(\bm x,\tilde{y}) -\ell_*(i)}{\ell^*(i) - \ell_*(i)}) $,
where $\kappa\ge 0$ is a hyperparameter, and bigger $\kappa$ implies that we assign smaller weights for examples with large losses. We embed the weight into MixMatch with weighted resampling technique and call it Weighted\_MixMatch. Denote $D_u = \{\bm x|\,\forall\, (\bm x,\tilde{y})\in \tilde{D}\backslash D_\text{sel}\}$, the above process can be formulated as $D_\text{sel\_WM} = \text{Weighted\_MixMatch}(D_\text{sel}, D_u)$. Then we name the method of training $g(\bm x;\Theta)$ with $D_\text{sel\_WM}$ by Weighted\_MixMatch (rather than $D_\text{sel}$) as RSL\_WM. 
We adopt the default hyper-parameters of standard MixMatch and additionally analyze the influence of $\kappa$ in experiments.

\begin{figure}[t]
\centering
 \includegraphics[width=\linewidth]{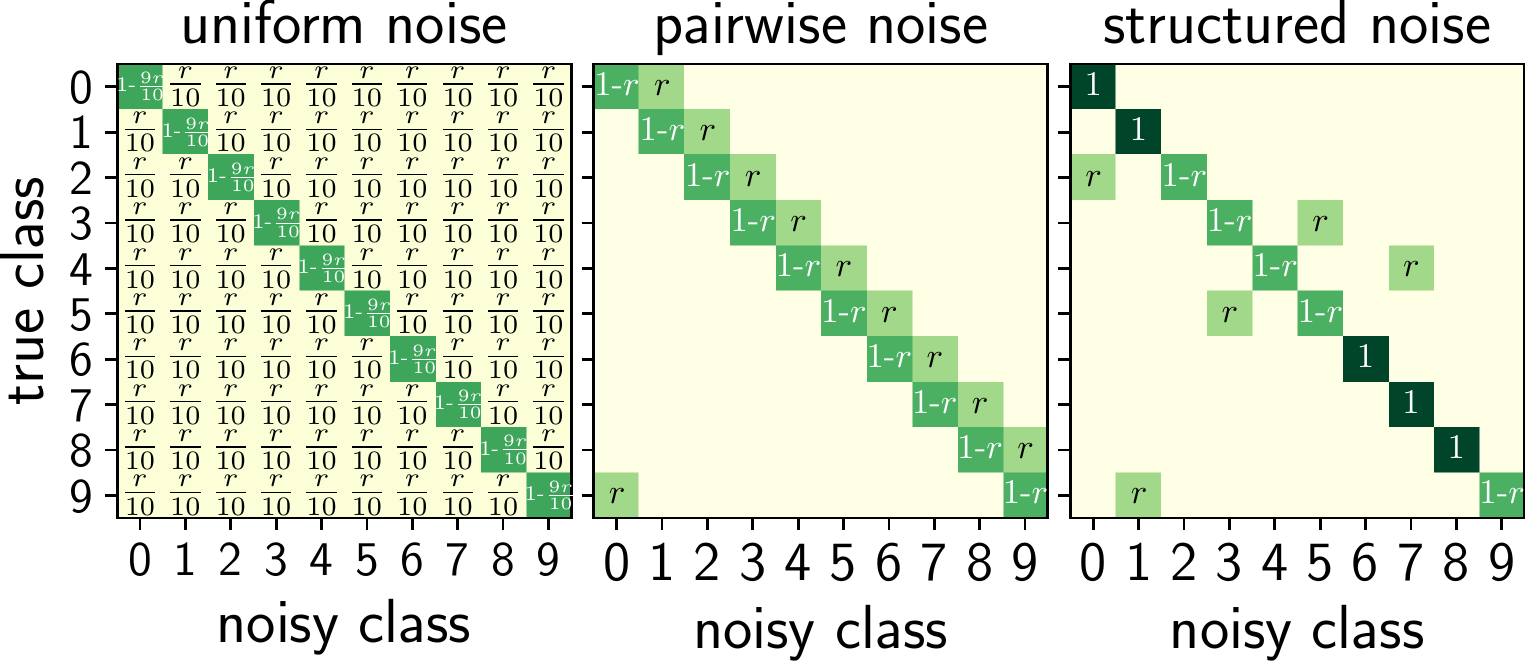}
   \caption{Three representative types of label noise on CIFAR-10.}
\label{fig:noise_matrix}
\end{figure}

\begin{figure*}[t]
\centering
\subfigure[uniform label noise]{
\centering
\includegraphics[width=0.32\linewidth]{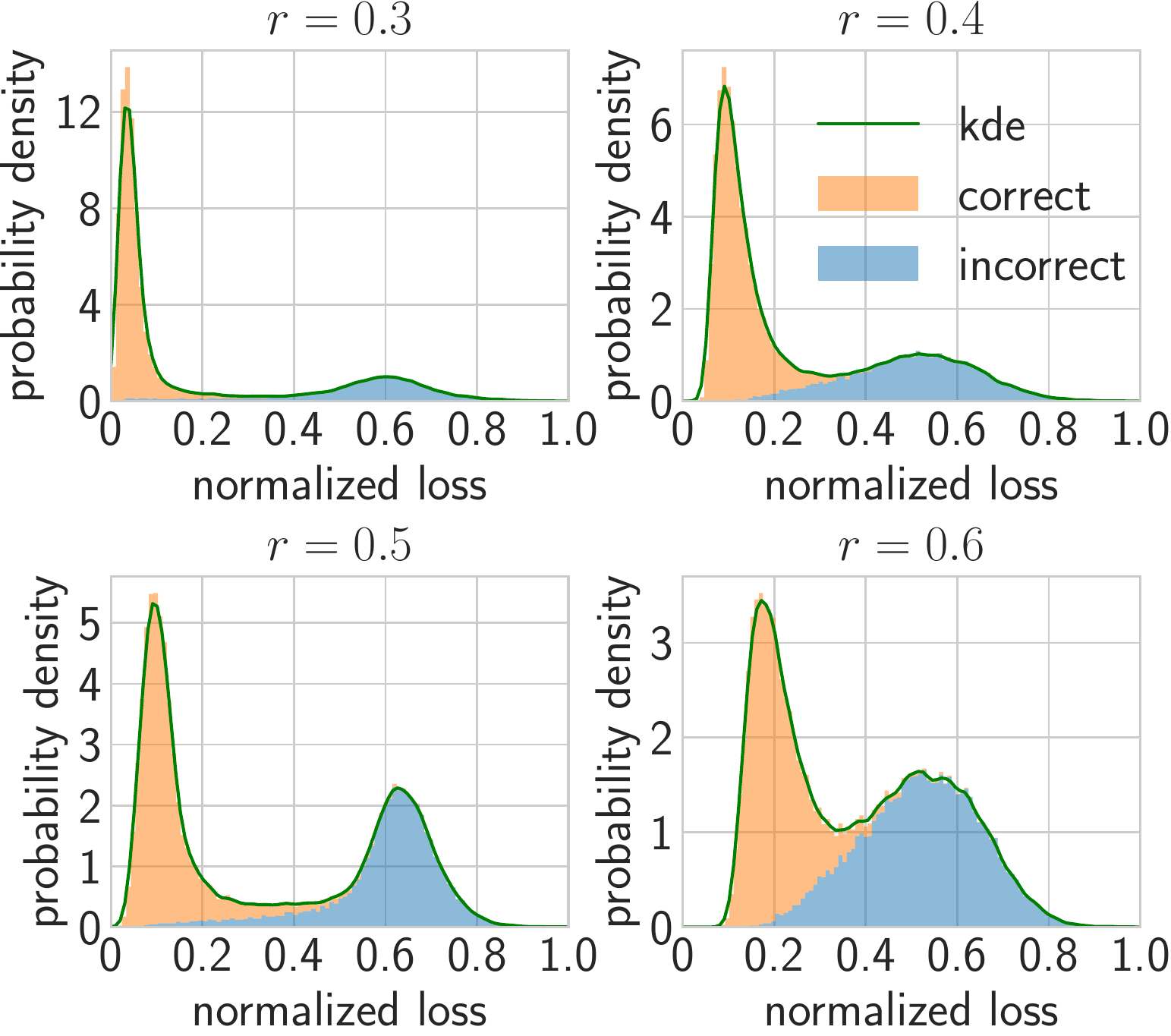}
\label{fig:loss_distribution_a}
}
\subfigure[pairwise label noise ]{
\centering
\includegraphics[width=0.32\linewidth]{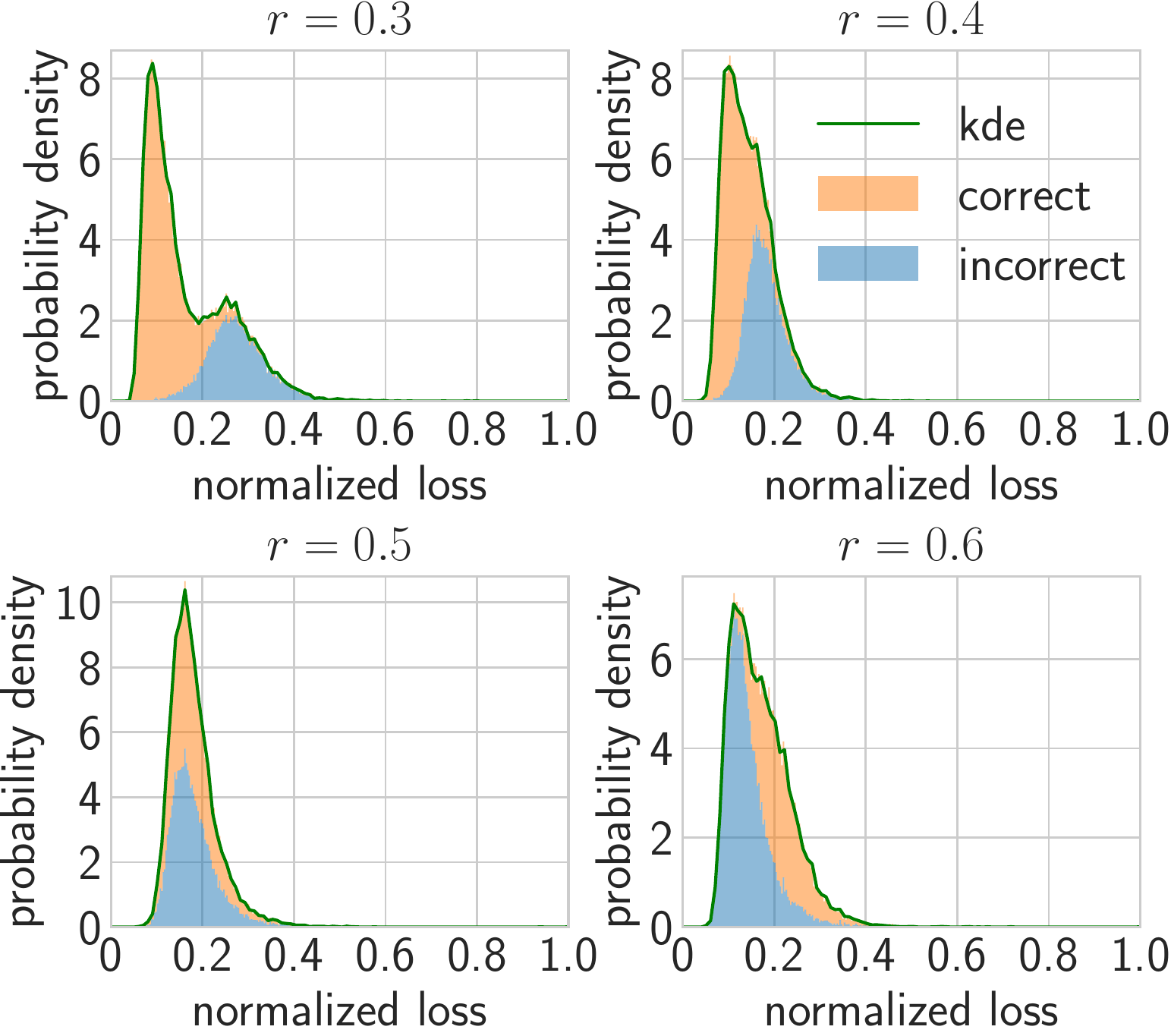}
\label{fig:loss_distribution_b}
}
\subfigure[structured label noise ]{
\includegraphics[width=0.32\linewidth]{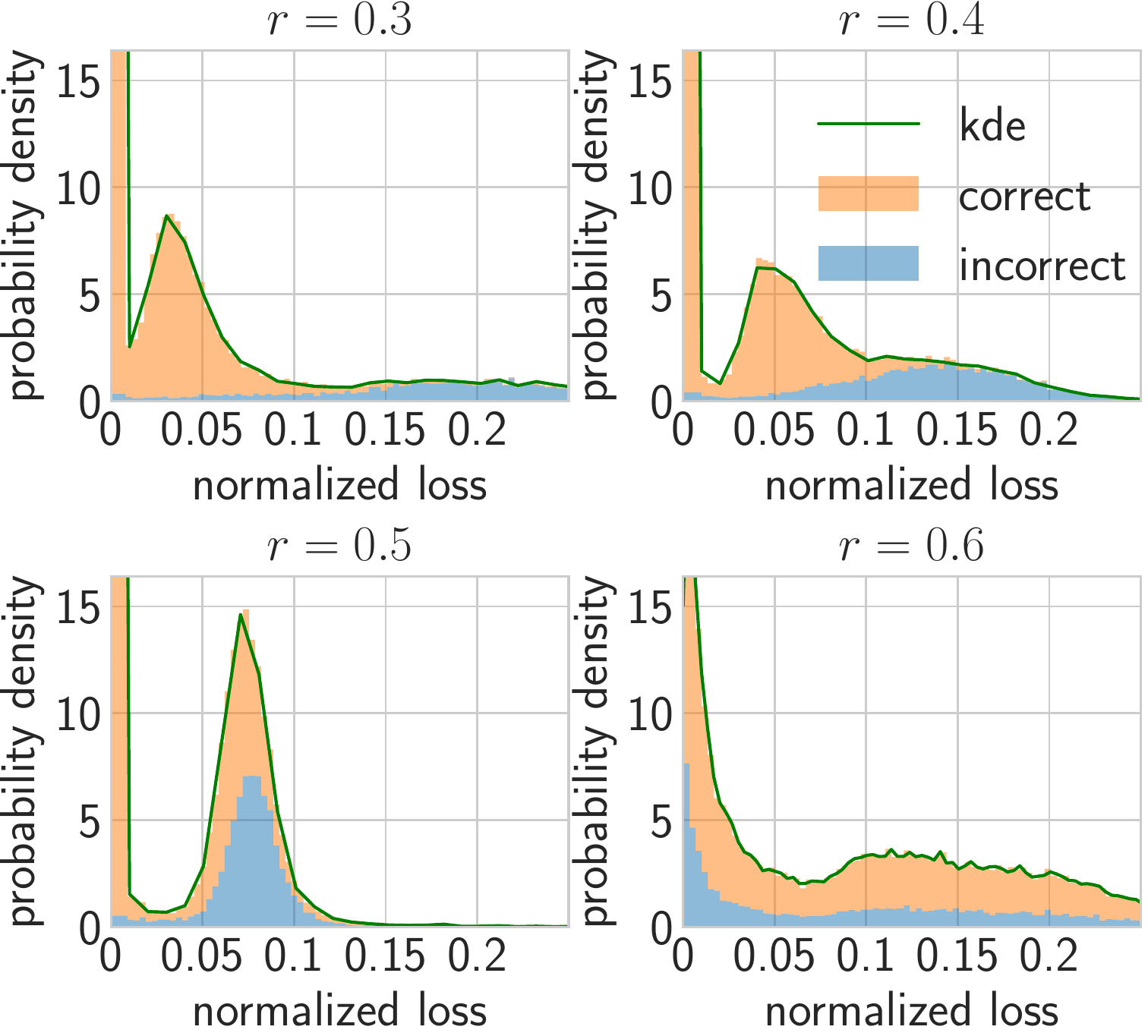}
\label{fig:loss_distribution_c}
}

\caption{Loss distribution of noisy examples on CIFAR-10 with different noise types and levels. The `kde’ represents kernel density estimation of the loss distribution. For (b) $r=0.5$, $r=0.6$, and (c) $r=0.5$, $r=0.6$, the diagonally-dominant condition is not satisfied.}
\label{fig:loss_distribution}
\end{figure*}

\section{Experiments}\label{section:experiment}
In this section, we conduct experiments on synthetic and real-world datasets to verify our theoretical explanation and the reformalization of the small-loss criterion RSL and RSL\_WM. 

\emph{Datasets.} The CIFAR-10/100 datasets contain 50K (10K) images for training (test). We retain 5K of the training set for validation following~\cite{tanaka2018joint}.
For label noise addition, we follow the common criteria in previous works~\cite{patrini2017making,han2018co}. Uniform noise is introduced by randomly flipping each label to one of 10/100  classes for CIFAR-10/100 with probability $r$. For pairwise noise, each class is circularly flipped to the next class with probability $r$ for CIFAR-10/100. We additionally synthesize structured noise for CIFAR-10 following~\cite{patrini2017making}, which mimics realistic noise taking place in similar classes: truck $\to$ automobile, bird $\to$ airplane, deer $\to$ horse, cat $\leftrightarrow$ dog with transition parameter $r$. The sketch map for these label noise types for CIFAR-10 is shown in Figure~\ref{fig:noise_matrix}.
The WebVision~\cite{Li2017a} dataset contains 2.4M noisy labeled images crawled from Flickr and Google by using 1,000 concepts in ILSVRC-2012~\cite{deng2009imagenet} as queries and the overall noise rate is rough $20\%$.
Following~\cite{chen2019understanding}, we use the first 50 classes of Google image subset for training and test on the corresponding $50$ classes of WebVision and ILSVRC-2012 validation set.

\begin{figure}[t]
\centering
\includegraphics[width=\linewidth]{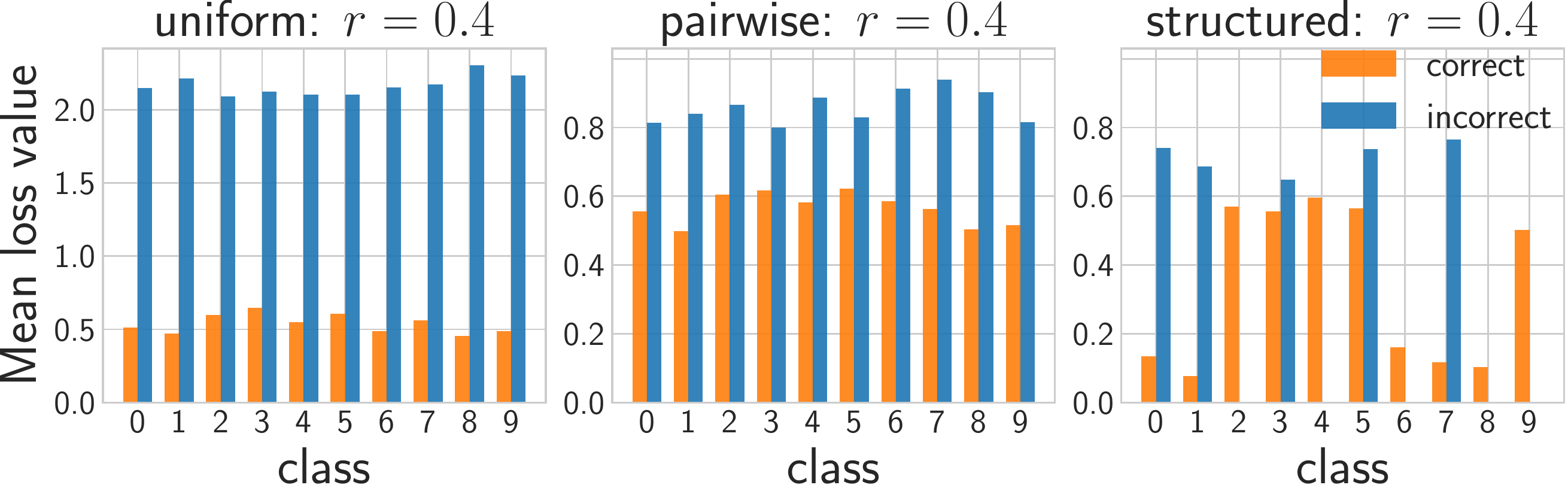}
\caption{Mean values of the mean loss of correct examples and incorrect ones for each class. For structured noise ($r=0.4$), some classes do not have label noise.}	
\label{fig:class-by-class}
\end{figure}

\begin{figure}[t]
\centering
\includegraphics[width=\linewidth]{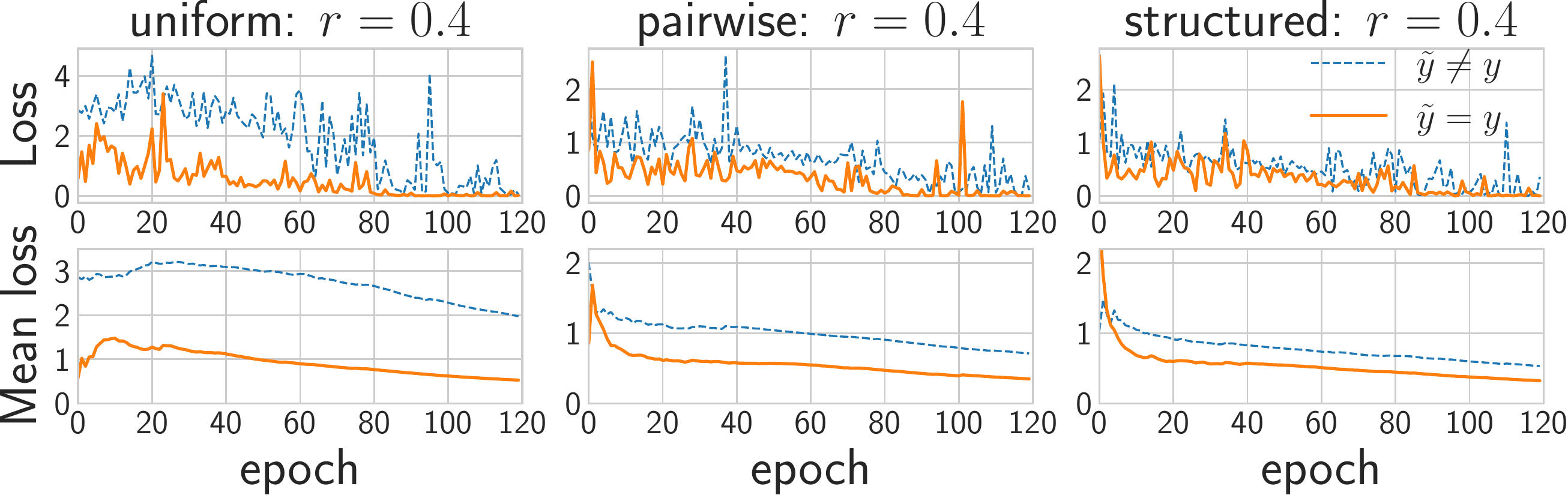}
\caption{Each epoch's loss and the cumulative mean loss for randomly chosen one pair of correct example and incorrect example.}
\label{fig:mean_vs_one_epoch}
\end{figure}

\begin{figure}[h] 
\centering
\includegraphics[width=\linewidth]{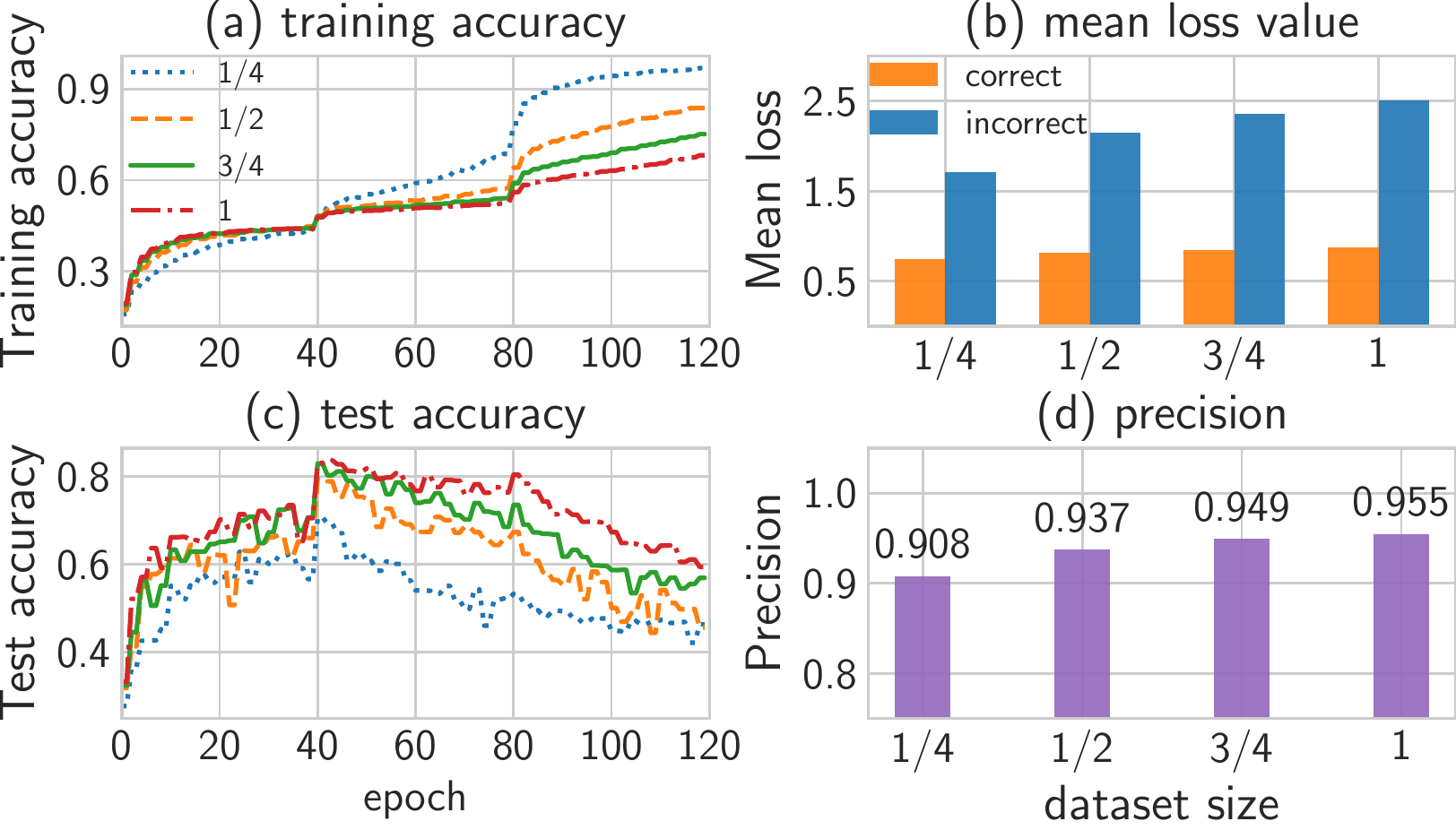}
\caption{Training accuracy and test accuracy of each epoch in the training process, the mean loss values for correct examples and incorrect ones, and the precision of the selected data (all with the same selection ratio) when using $\frac{1}{4}$, $\frac{1}{2}$, $\frac{3}{4}$ and $1$ of the original noisy training set (uniform  noise: $r=0.4$) respectively.}
\label{fig:different_data}
\end{figure}

\emph{Baselines.}
In this paper, we mainly contrast to state-of-the-art sample selection-based methods: Co-teaching~\cite{han2018co}, Co-teaching+~\cite{yu2019does}, INCV~\cite{chen2019understanding} and  JoCoR~\cite{wei2020combating}.
We also implement ``Cross Entropy'' method, which does not use any special treatments for label noise.
Additionally, we also compare with $\mathcal{L}_{\text{DMI}}$~\cite{xu2019l_dmi} on CIFAR-10. Since $\mathcal{L}_{\text{DMI}}$ can not be used for datasets with a large number of classes, we compare with Truncated $\mathcal{L}_q$~\cite{zhang2018generalized} on CIFAR-100.

\subsection{Empirical Findings} 
\label{subsection:verify-theory}

We verify our theoretical explanation with the following empirical findings on synthetic CIFAR-10 with different noise types and levels.
Due to space limit, detailed experimental settings and more  discussions can be found in Appendix~B.

Figure~\ref{fig:loss_distribution} implies that when the diagonally-dominant condition is not satisfied, incorrect examples may even have relative smaller loss than correct ones (see Figure~\ref{fig:loss_distribution} (b) $r=0.5$ and $r=0.6$, (c) $r=0.5$ and $r=0.6$), which verifies the necessity of this condition for learning from noisy labels with small-loss criterion.
Figure~\ref{fig:class-by-class} shows that the losses for different classes are not comparable, especially for more realistic structured label noise, which justifies the necessity of ranking the losses of examples class by class when selecting small-loss examples. 
Figure~\ref{fig:mean_vs_one_epoch} shows that there exist large fluctuations for single epoch's loss, while for the mean loss, the value of the correct example ($\tilde{y}=y$) is steadily smaller than that of the incorrect one ($\tilde{y}\neq y$). It justifies the effectiveness of using the mean loss for small-loss criterion. 
Theorem~\ref{thm:weak-small-loss} implies that the distance between the model $g$ and $g^*$ is important. In general, when trained with a bigger training set sampled from the noisy data distribution, the model $g$ will be closer to $g^*$. Thus we train models using different sizes of the training set to simulate different $g$ with different distances to $g^*$, and show the training dynamics and performances in Figure~\ref{fig:different_data}.
It can be found when $g$ is closer to $g^*$, the loss gap is bigger and the precision of the selected data is higher, which verifies that the small-loss criterion will have better utility when the model $g$ is closer to $g^*$. 
To further verify the effectiveness of the reformalization of small-loss criterion,  we compare the precision of the selected data with different methods.
In Algorithm~\ref{alg:selection}, we propose to select less data than $1-\eta$ for safety, where $\eta$ is the overall noise rate. 
But in here for fair comparison, we set the selected proportion as $1-\eta$ for all methods. 
Since the selected proportions of INCV and Co-teaching+ depend on the algorithms and can not be manually set, we only compare with Co-teaching and JoCoR. 
As shown in Table~\ref{tab:precision}, by comparing ``Only Mean Loss’’ and ``Our Method’’ with Co-teaching and JoCoR on all noise settings especially for structured noise (CIFAR-10), the effectiveness of ``mean loss’’ and ``selecting class by class’’ are verified respectively.

\begin{table*}[tbp]
\centering
\resizebox{\textwidth}{!}{%
\begin{tabular}{c|ccccc|cccc|cccc|cccc|cccc}
\hline\hline
\multirow{2}{*}{Method}  & \multicolumn{13}{c|}{CIFAR-10}                                                                                                                                                                                             & \multicolumn{8}{c}{CIFAR-100}                                                                                                        \\ \cline{2-22} 
                         & \multicolumn{5}{c|}{uniform noise}                                                 & \multicolumn{4}{c|}{pairwise noise}                               & \multicolumn{4}{c|}{structured noise}                             & \multicolumn{4}{c|}{uniform noise}                                & \multicolumn{4}{c}{pairwise noise}                               \\ \hline
noise parameter $r$ (\%) & 10             & 30             & 50             & 70             & 90             & 10             & 20             & 30             & 40             & 10             & 20             & 30             & 40             & 20             & 40             & 60             & 80             & 10             & 20             & 30             & 40             \\ \hline
Co-teaching              & 98.58          & 95.32          & 92.25          & 85.32          & 32.68          & 98.01          & 95.61          & 93.42          & 81.36          & 98.19          & 96.55          & 95.04          & 90.58          & 96.30          & 92.34          & 85.37          & 33.38          & 92.38          & 86.73          & 77.30          & 65.21          \\ \hline
JoCoR                    & 98.89          & 96.03          & 93.14          & 86.47          & 20.29          & 98.27          & 96.29          & 93.75          & 82.73          & 98.47          & 96.85          & 95.26          & 91.69          & 96.64          & 92.62          & 86.70          & 40.78          & 94.42          & 88.39          & 79.89          & 67.71          \\ \hline
Only Mean Loss           & 99.01          & 97.46          & 94.59          & 88.31          & 46.72          & 98.74          & 97.33          & 94.12          & 84.60          & 98.58          & 97.08          & 95.37          & 91.98          & 97.01          & 93.43          & 87.87          & 65.11          & 95.80          & 89.72          & 81.22          & 68.77          \\ \hline
Our Method               & \textbf{99.09} & \textbf{97.47} & \textbf{94.65} & \textbf{88.38} & \textbf{46.91} & \textbf{98.81} & \textbf{97.43} & \textbf{94.69} & \textbf{84.68} & \textbf{99.81} & \textbf{99.38} & \textbf{97.97} & \textbf{94.96} & \textbf{97.22} & \textbf{93.70} & \textbf{88.34} & \textbf{66.12} & \textbf{95.98} & \textbf{90.29} & \textbf{82.00} & \textbf{69.69} \\ \hline
\end{tabular}%
}
\caption{The precision (\%) of the selected data with different methods on CIFAR-10 and CIFAR-100. ``Only Mean Loss’’ represents using mean loss  but not selecting examples class by class. ``Our Method’’ represents using mean loss and selecting examples class by class.}
\label{tab:precision}
\end{table*}

\begin{table*}[ht]  
\centering
\resizebox{0.8\textwidth}{!}{%
\begin{tabular}{c|c|ccccc|cccc|cccc}
\hline
\hline
\multicolumn{2}{c|}{Method}  & \multicolumn{5}{c|}{uniform noise} & \multicolumn{4}{c|}{pairwise noise} & \multicolumn{4}{c}{structured noise} \\ \hline
\multicolumn{2}{c|}{noise parameter $r$ (\%)} & 10 & 30 & 50 & 70 & 90 & 10 & 20 & 30 & 40 & 10 & 20 & 30 & 40 \\ \hline
\multirow{2}{*}{Cross Entropy} & best & 91.24 & 88.30 &84.85  & 78.13 & 44.90 & 91.32 & 90.83 & 88.96 & 83.20& 91.80 & 90.95& 88.87 &86.57  \\
& last & 86.70 & 72.12 & 55.24 & 32.97 & 19.45 & 85.59 & 78.83 & 67.70 &  56.12 & 89.70 & 84.89 & 80.26 & 75.76  \\ \hline
\multirow{2}{*}{$\mathcal{L}_{\text{DMI}}$} & best &  90.47 & 87.76 & 84.12 & 77.85 & 36.71 & 91.13 & 90.90 & 89.12 & 85.56  & 91.14 &  90.19 & 88.41 &86.72  \\
& last & 90.07 & 87.74 & 84.10 & 77.73 & 36.37 & 91.03 & 90.45 & 88.87 & 85.32 & 90.28 & 89.43 & 88.13 & 86.25 \\ \hline
 \multirow{2}{*}{Co-teaching} & best & 90.60 & 89.83 & 85.14 & 65.76  & 11.70   & 91.59 & 89.42 &87.37  &78.18 & 90.72 & 89.67& 87.12 & 80.59 \\
& last & 90.36 & 88.98 & 85.09  & 65.65 & 11.69 & 90.71 & 89.02 & 87.24 & 71.76 & 90.35 & 89.63  & 86.73 & 77.81 \\ \hline
 \multirow{2}{*}{Co-teaching+}& best & 90.93 & 89.98 & 86.52 & 77.44 & 10.74 & 91.52 &90.22& 87.55  & 82.15  & 91.28 & 90.29& 88.17 &81.46  \\
& last & 90.90 & 89.36 & 86.48 & 77.38 & 10.54 & 91.30 & 89.37& 87.30 & 81.47 & 90.65 &  90.05 & 87.44   & 80.28  \\ \hline
\multirow{2}{*}{INCV} & best & 91.82& 90.72  & 86.34 & 73.11 & 38.38 & 91.42 & 89.26 & 87.84 &85.73 & 91.85 & 90.58 &  87.89 & 86.43 \\
& last & 91.79 & 89.48 & 86.43 & 72.78 & 38.29 & 91.37 & 89.19 & 87.50 & 85.18 & 91.62 & 90.14 & 87.68 & 86.23 \\ \hline
\multirow{2}{*}{JoCoR} & best & 92.30 & 89.52 &  87.27 & 79.57 & 26.38 & 91.87 & 90.38 & 88.42  & 83.48 & 92.02  & 90.87 & 88.78 & 83.59 \\& last & 92.28 & 89.48 & 85.86 & 79.62 &  25.18  & 91.82 & 90.32 & 87.44 & 83.42 & 91.99 & 90.23 & 88.04 & 83.40\\ \hline
\multirow{2}{*}{RSL} & best & 93.32& 91.34 & 88.21 & 82.21 & 39.75 & 92.71 & 91.13 & 90.51 & 86.73 & 92.58 & 91.32 & 89.97 &87.91  \\
& last & 93.23 & 91.13 & 87.93 & 82.08 & 39.54 & 92.47 & 90.89 & 90.31 & 86.57 & 92.42 & 91.24 & 89.83 & 87.85  \\ \hline
\multirow{2}{*}{RSL\_{WM}} & best & \textbf{94.15} & \textbf{93.78} & \textbf{93.38} & \textbf{91.51} & \textbf{48.33} & \textbf{94.08}  & \textbf{93.73} & \textbf{93.40} & \textbf{89.27} & \textbf{93.57} & \textbf{93.12} & \textbf{92.78} & \textbf{91.17} \\
& last & \textbf{93.59} & \textbf{93.42} & \textbf{93.27} & \textbf{91.31} & \textbf{47.43} & \textbf{93.21} & \textbf{93.19} & \textbf{93.10} & \textbf{88.85} & \textbf{93.33} & \textbf{92.83} & \textbf{92.34} & \textbf{90.63} \\ \hline
\end{tabular}%
}
\caption{The accuracy (\%) results on CIFAR-10. The term ``best’’ means the test accuracy of the epoch when validation accuracy is maximum, and ``last’’ means the test accuracy of the last epoch.}
\label{tab:CIFAR-10}
\end{table*}

\begin{table}[tbp]
\centering
      \resizebox{0.47\textwidth}{!}{%
	\begin{tabular}{c|c|cccc|cccc}
\hline\hline
\multicolumn{2}{c|}{Method}             & \multicolumn{4}{c|}{uniform noise} & \multicolumn{4}{c}{pairwise noise} \\ \hline
\multicolumn{2}{c|}{noise parameter $r$ (\%)} & 20      & 40      & 60      & 80     & 10      & 20      & 30      & 40      \\ \hline
\multirow{2}{*}{ Cross Entropy}         & best    & 62.61      & 53.00   & 42.74 & 29.08 &     68.18   & 64.31 & 59.05 &  45.70      \\
                                               & last    &  57.44   &  41.96  & 26.05 & 12.76  &  67.24   & 61.13 & 54.03 & 44.44        \\ \hline
\multirow{2}{*}{ Truncated $\mathcal{L}_q$}  & best    & 67.41 & 62.77 & 54.60  &19.47  &  68.93   & 67.36 & 62.21 & 46.89        \\
                                               & last    & 66.48  &  62.28  & 53.48 & 17.48  &  68.80 & 67.06 &62.12 & 45.97        \\ \hline
\multirow{2}{*}{Co-teaching}         & best    & 69.94 & 63.65   & 54.64  & 12.75  &   68.74    &     67.91  &    62.66      &   50.44   \\
                                                 & last    & 69.53       &   63.23   & 53.57 & 11.27  & 68.46      &   66.24  & 61.84          &   48.83     \\ \hline
\multirow{2}{*}{Co-teaching+}         & best    &  65.43     &  63.21  & 54.33 &  11.52 &  67.53      &   64.83     &      59.75     & 46.33       \\
                                                & last    & 64.74        &   62.69  & 52.23  & 10.57 &    67.37  &    64.26     &     58.59        &   45.67      \\ \hline
\multirow{2}{*}{INCV}         & best    &   62.68    &  59.78  & 41.39 & 23.43 &  63.93      & 56.68      &  50.87         & 38.95       \\
                                               & last    &  62.65   &  59.69  & 41.24 & 23.32  & 63.87     &  56.48     &     50.81        & 38.84        \\ \hline
\multirow{2}{*}{JoCoR}         & best    &   71.40    & 66.80   & 58.40 & 23.44  &    72.31    &  67.92     & 63.38          &    \textbf{54.37}    \\
                                               & last    &  70.62   &  66.10  & 57.65 & 23.36  & 71.81    &   67.32    &     62.79       &   \textbf{53.74}      \\ \hline
\multirow{2}{*}{RSL}        & best    & 72.12      &  67.23  &  59.24 & 38.32  &      72.42 &  68.43     &     62.45      &  53.62      \\
                                               & last    &  71.84   &  67.03  & 58.78 & 38.04  &  72.46   &    68.27   &        62.23     & 53.25        \\ \hline
\multirow{2}{*}{RSL\_WM}        & best    &   \textbf{74.88}     &  \textbf{71.51}    &  \textbf{67.25}  & \textbf{49.58} &  \textbf{74.48}      &     \textbf{71.18}    & \textbf{64.67}   & 54.34     \\
                                            & last    &  \textbf{73.92}     &  \textbf{70.69}    & \textbf{66.07} &  \textbf{49.17} &   \textbf{73.77}    &   \textbf{70.54}    &   \textbf{63.87}    &  53.65     \\ \hline
\end{tabular}%
}
\caption{The accuracy (\%) results on CIFAR-100.} 
     \label{tab:CIFAR-100}
\end{table}

\subsection{Evaluation on Benchmark Datasets}\label{subsection:method}
We evaluate the performance of the proposed method on benchmark datasets.
As that in Co-teaching, Co-teaching+ and JoCoR, we also assume that the noise rates are known and put sensitivity analysis of our method on noise rates in Appendix~C.
We use the default parameters $\beta=0.2$, $\gamma =(\gamma_0 + \gamma_1)/2$
 and $\kappa=-\log(0.7)$ in experiments.
Due to space limit, we put all the implementation details and sensitivity analysis on $\beta$, $\gamma$ and $\kappa$ in Appendix~C. 
For CIFAR-10 and CIFAR-100, as shown in Tables~\ref{tab:CIFAR-10} and~\ref{tab:CIFAR-100}, RSL\_WM achieves the best performance in almost all noise settings. Although we select less data than Co-teaching and JoCoR, learning only with the selected data (RSL) still gets better accuracy than the compared methods in most cases. 
By exploiting the information of the unselected data with Weighted\_MixMatch, better performance is achieved, which verifies the effectiveness of this framework. In the pairwise noise ($r=0.4$) on CIFAR-100, our method is less competitive due to we select much less data than JoCoR.
For WebVision, we select the first $76\%$ ($\beta=0.2$) examples with small mean loss for each class. 
We also compare with F-correction~\cite{patrini2017making}, MentorNet~\cite{jiang2018mentornet}, and D2L~\cite{ma2018dimensionality} since they have the same experimental setting as ours. 
As shown in Table~\ref{tab:WebVision}, our method achieves better performance than the compared methods.

\begin{table}[t]
\centering
\resizebox{0.35\textwidth}{!}{%
\begin{tabular}{c|cc|c|c}
\hline
\hline
\multirow{2}{*}{Method} & \multicolumn{2}{c|}{WebVision Val.} & \multicolumn{2}{c}{ILSVRC2012 Val.} \\ \cline{2-5}   
 & \multicolumn{1}{c|}{top1} & top5 &  \multicolumn{1}{c|}{top1} & top5 \\ \hline
Cross Entropy & \multicolumn{1}{c|}{58.24} & 79.26 & 54.83 & 77.70 \\ \hline
F-correction & \multicolumn{1}{c|}{61.12} & 82.68 & 57.36 & 82.36 \\ \hline
Co-teaching & \multicolumn{1}{c|}{63.58} & 85.20 & 61.48 & 84.70 \\ \hline
MentorNet & \multicolumn{1}{c|}{63.00} & 81.40 & 57.80 & 79.92 \\ \hline
D2L & \multicolumn{1}{c|}{62.68} & 84.00 & 57.80 & 81.36 \\ \hline
Co-teaching+& \multicolumn{1}{c|}{63.21} & 84.78 & 61.32 & 83.52 \\ \hline
INCV & \multicolumn{1}{c|}{65.24} & 85.34 & 61.60 & 84.38 \\ \hline
JoCoR & \multicolumn{1}{c|}{65.28} & 85.38 & 61.54&84.46 \\ \hline
RSL & \multicolumn{1}{c|}{65.64} & 85.72 & 62.04 & 84.84 \\ \hline
RSL\_WM & \multicolumn{1}{c|}{\textbf{66.56}} & \textbf{86.54} & \textbf{63.40} & \textbf{85.43} \\ \hline
\end{tabular}%
}
\caption{The accuracy (\%) results on WebVision.}
\label{tab:WebVision}
\end{table}

\section{Conclusion}

In this paper, we establish the connection between noisy data distribution and the small-loss criterion and theoretically explain why the widely-used small-loss criterion works.
In the future, we will consider extending the theoretical explanation to the instance-dependent label noise~\cite{xia2020part}.

\section*{Acknowledgments}
This work is supported by the National Key Research and Development Program of China (2017YFB1002201), the National Science Foundation of China (61921006, 61673202), and the Collaborative Innovation Center of Novel Software Technology and Industrialization.

\setcounter{figure}{5}
\setcounter{table}{4}
\setcounter{theorem}{0}

\begin{appendices}
\setcounter{figure}{5}
\setcounter{table}{4}
\setcounter{theorem}{0}
\setcounter{lemma}{0}
\section{Proof}\label{appendix:proof}

\begin{lemma}\label{lemma:1988} 
If $T$ satisfies the row-diagonally dominant condition $T_{ii}>\max_{j\neq i}T_{ij}$, $\forall i$, then the target concept $f^*$ has the minimum expected 0-1 loss on the noisy data, i.e., $\forall \, f\neq f^*$, $\mathbb E_{(\bm x,\tilde{y})}[\ell_{01}(f^*(\bm x), \tilde{y})] \le \mathbb E_{(\bm x,\tilde{y})}[\ell_{01}(f(\bm x), \tilde{y})]$.
\end{lemma}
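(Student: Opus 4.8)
The plan is to reduce the expected $0$-$1$ loss to a pointwise comparison over instances and then exploit the fact that, under the target concept, each $\bm x$ carries a deterministic true label $y = f^*(\bm x)$. First I would apply the tower property, conditioning on $\bm x$, to write
\[
\mathbb E_{(\bm x,\tilde{y})}[\ell_{01}(f(\bm x), \tilde{y})] = \mathbb E_{\bm x}\big[\,p(\tilde{y}\neq f(\bm x)\mid \bm x)\,\big] = \mathbb E_{\bm x}\big[\,1 - p(\tilde{y}= f(\bm x)\mid \bm x)\,\big].
\]
In this form, minimizing the expected loss is equivalent to maximizing the conditional agreement probability $p(\tilde{y}=f(\bm x)\mid \bm x)$ separately at each $\bm x$.

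The key step is to evaluate $p(\tilde{y}=k\mid \bm x)$ for each label $k$. Because the target concept labels each instance deterministically, $p(y=i\mid\bm x)=\mathbb I[i=f^*(\bm x)]$, and combining this with the class-conditional noise assumption $p(\tilde y\mid y,\bm x)=p(\tilde y\mid y)$ collapses the mixture over true classes to a single row of $T$:
\[
p(\tilde{y}=k\mid \bm x) = \sum_{i=1}^{c} p(\tilde{y}=k\mid y=i)\,p(y=i\mid\bm x) = T_{f^*(\bm x),\,k}.
\]
Thus, for a fixed $\bm x$ with true label $y=f^*(\bm x)$, choosing $f(\bm x)$ to maximize agreement is exactly choosing the column index that maximizes the $y$-th row of $T$.

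The row-diagonally dominant condition $T_{yy}>\max_{j\neq y}T_{yj}$ now finishes the argument immediately: the $y$-th row attains its maximum (uniquely) at the diagonal, so $T_{y,f^*(\bm x)}=T_{yy}>T_{y,k}$ for every $k\neq f^*(\bm x)$. Hence $f^*(\bm x)$ is the pointwise minimizer of $1-p(\tilde{y}=f(\bm x)\mid\bm x)$ for every $\bm x$, and integrating over $\bm x$ yields $\mathbb E_{(\bm x,\tilde{y})}[\ell_{01}(f^*(\bm x),\tilde{y})]\le\mathbb E_{(\bm x,\tilde{y})}[\ell_{01}(f(\bm x),\tilde{y})]$ for all $f$. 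I do not expect a genuine obstacle; the only point requiring care is the identity $p(\tilde{y}=k\mid\bm x)=T_{f^*(\bm x),k}$, which rests on the deterministic labeling (so the class mixture degenerates to a single row) together with the class-conditional assumption decoupling the noise from $\bm x$. Once that identity is established, diagonal dominance reduces the whole problem to a one-line comparison of matrix entries.
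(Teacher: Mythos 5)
Your proposal is correct and follows essentially the same route as the paper's proof: both condition on $\bm x$ to reduce the expected loss to $\mathbb E_{\bm x}\big[1 - p(\tilde y = f(\bm x)\mid \bm x)\big]$, use the class-conditional noise assumption together with the deterministic labeling $y=f^*(\bm x)$ to obtain $p(\tilde y = k\mid \bm x)=T_{f^*(\bm x)k}$, and then invoke row-diagonal dominance to make $f^*$ the pointwise maximizer of agreement before integrating. No gaps; the argument matches the paper's.
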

\begin{proof} 
$\forall\, f:\mathcal{X}\to\mathcal{Y}$, 
\[
\begin{split}
P(f(\bm x)\neq\tilde{y}) &= \mathbb{E}_{(\bm x, \tilde{y})} [\ell_{01}(f(\bm x), \tilde{y})]\\
{}&=\mathbb{E}_{(\bm x, \tilde{y})} \mathbb{I}[f(\bm x) \neq \tilde{y}]\\
{}&=\int_{\bm x\in \mathcal{X}}  \sum_{j=1}^c \mathbb{I}[f(\bm x) \neq j] p(\bm x, \tilde{y}=j) \mathrm{d}\bm x\\
{}&=\int_{\bm x\in \mathcal{X}}  \big(1-p(\tilde{y} = f(\bm x)|\bm x)\big)p(\bm x) \mathrm{d}\bm x.\\
\end{split}
\]

With the class-conditional noise assumption, we have $p(\tilde{y} = f(\bm x)|\bm x) =\sum_{i=1}^c p(\tilde{y} = f(\bm x), y=i|\bm x)=\sum_{i=1}^c p(\tilde{y} = f(\bm x)|y=i, \bm x)p(y=i|\bm x)=\sum_{i=1}^c p(\tilde{y} = f(\bm x)|y=i)p(y=i|\bm x)=\sum_{i=1}^c T_{if(\bm x)} p(y=i|\bm x)$, and thus
\[
\begin{split}
P(f(\bm x)\neq\tilde{y})&=\int_{\bm x\in \mathcal{X}}  \big(1-\sum_{i=1}^c T_{if(\bm x)} p(y=i|\bm x)\big)p(\bm x) \mathrm{d}\bm x\\
{}&=\int_{\bm x\in \mathcal{X}}  [1-T_{f^*(\bm x)f(\bm x)}] p(\bm x) \mathrm{d}\bm x,\\
\end{split}
\]
where the last equation is due to that each $\bm x$ has a unique true label $f^*(\bm x)$. When $f=f^*$ in the above equation, we have $P(f^*(\bm x)\neq\tilde{y}) =\int_{\bm x\in \mathcal{X}}  [1-T_{f^*(\bm x)f^*(\bm x)}] p(\bm x) \mathrm{d}\bm x$. When the matrix $T$ satisfies the row-diagonally dominant condition $T_{ii}>\max_{j\neq i}T_{ij}$, we have $1-T_{f^*(\bm x)f^*(\bm x)} < 1- T_{f^*(\bm x)j}$, $\forall\, j\neq f^*(\bm x)$, and thus for any $f\neq f^*$, we have $P(f^*(\bm x)\neq\tilde{y})\le P(f(\bm x)\neq\tilde{y})$, \ie, $\mathbb E_{(\bm x,\tilde{y})}[\ell_{01}(f^*(\bm x), \tilde{y})] \le \mathbb E_{(\bm x,\tilde{y})}[\ell_{01}(f(\bm x), \tilde{y})]$. More rigorously, if the measure of the area where $f$ not equals to $f^*$ is non-zero, \ie, $P(f(\bm x)\neq f^*(\bm x))>0$, we have $\mathbb E_{(\bm x,\tilde{y})}[\ell_{01}(f^*(\bm x), \tilde{y})] < \mathbb E_{(\bm x,\tilde{y})}[\ell_{01}(f(\bm x), \tilde{y})]$.
\end{proof}

\begin{lemma}\label{lemma:row-dominant}
Let $g^*$ denote the deep neural network minimizing the cross-entropy loss in Eq.~(1), the induced classifier $f_{g^*}$ satisfies $f_{g^*}(\bm x)=y$, $\forall \bm x\in\mathcal{X}$, if and only if $T$ satisfies the row-diagonally dominant condition $T_{ii}> \max_{j\neq i}T_{ij}$, $\forall i$.
\end{lemma}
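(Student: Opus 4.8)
The plan is to exploit the expressive power of the deep network to reduce the global minimization in Eq.~(1) to a pointwise problem, then identify the optimal softmax output and read off the classifier it induces. Concretely, I would first rewrite the expected cross-entropy loss as
\[
\mathbb E_{(\bm x,\tilde y)}[\ell_{CE}(g(\bm x),\tilde y)] = \int_{\bm x\in\mathcal X} p(\bm x)\sum_{j=1}^c p(\tilde y=j\mid\bm x)\big(-\log\hat p_j(\bm x)\big)\,\mathrm d\bm x,
\]
and then argue that, since $g(\cdot;\Theta)$ is a sufficiently expressive (universal) approximator whose softmax head can realize essentially any conditional distribution $\hat p(\bm x)$ on the simplex, the minimizer in Eq.~(1) can be obtained by minimizing the integrand separately for each $\bm x$.

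Second, for a fixed $\bm x$ with $q_j:=p(\tilde y=j\mid\bm x)$, the inner objective $\sum_j q_j(-\log\hat p_j)$ is the cross-entropy between $q$ and $\hat p$; by the standard Gibbs inequality (non-negativity of the KL divergence) it is minimized over probability vectors $\hat p$ exactly at $\hat p=q$. Hence the optimal network satisfies $\hat p^*_j(\bm x)=p(\tilde y=j\mid\bm x)$. Third, I would compute this posterior using the class-conditional noise assumption together with the fact that each $\bm x$ has a unique true label $f^*(\bm x)$, exactly as in the proof of Lemma~1: $p(\tilde y=j\mid\bm x)=\sum_i T_{ij}\,p(y=i\mid\bm x)=T_{f^*(\bm x)\,j}$, since $p(y=i\mid\bm x)=\mathbb I[i=f^*(\bm x)]$. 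Therefore $f_{g^*}(\bm x)=\arg\max_j \hat p^*_j(\bm x)=\arg\max_j T_{f^*(\bm x)\,j}$; that is, the induced label at $\bm x$ is the column index of the largest entry in row $f^*(\bm x)$ of $T$.

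Finally, I would establish the equivalence directly from this characterization. For the ``if'' direction, if $T_{ii}>\max_{j\neq i}T_{ij}$ for every $i$, then for any $\bm x$ with true class $i=f^*(\bm x)$ the row-maximum is attained uniquely at $j=i$, so $f_{g^*}(\bm x)=i=f^*(\bm x)=y$. For the ``only if'' direction, if $f_{g^*}(\bm x)=y$ for all $\bm x$, then for each class $i$ realized as a true label the argmax of row $i$ must return $i$; were the diagonal entry only tied with an off-diagonal entry ($T_{ii}=\max_{j\neq i}T_{ij}$), the argmax could select some $j\neq i$, contradicting $f_{g^*}=f^*$, so strict dominance $T_{ii}>\max_{j\neq i}T_{ij}$ is forced.

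The main obstacle, and the step I expect to require the most care, is the first one: justifying that the argmin over network parameters coincides with the pointwise optimal posterior. This rests on an (implicit) universal-approximation / realizability assumption. A secondary technicality is that the softmax output lies in the open simplex, so when some $T_{f^*(\bm x)j}=0$ the optimum $\hat p^*=q$ is approached rather than exactly attained; I would note that the relative ordering of the components, and hence the $\arg\max$, is preserved in the limit, so the induced classifier is unaffected.
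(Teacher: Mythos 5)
Your proposal follows essentially the same route as the paper's proof: reduce the expected loss to a pointwise minimization over the simplex, identify the optimal softmax output as the noisy posterior $\hat p_j^*(\bm x)=p(\tilde y=j\mid\bm x)=T_{f^*(\bm x)j}$ via the class-conditional noise assumption, and read off the equivalence between $f_{g^*}=f^*$ and row-diagonal dominance from the $\arg\max$. The only differences are cosmetic---you invoke Gibbs' inequality where the paper uses Lagrange multipliers for the same inner optimum, and you are somewhat more explicit about the realizability assumption and the open-simplex boundary case, both of which the paper leaves implicit.
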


\begin{proof}

For notational clarity, we denote the one-hot encoding of $\tilde{y}$ is $\tilde{\bm d}=[\tilde{d}_1, \dots, \tilde{d}_c]$, \ie, $\tilde{d}_{\tilde y} = 1$ and $\tilde{d}_i = 0$, $\forall\, i\ne \tilde y$.
For cross-entropy loss function, $\ell_{CE}(g(\bm x;\Theta), \tilde{y})= -\sum_{i=1}^c \tilde{d}_i\log(\hat{p}_i(\bm x))$.
Considering the expected cross-entropy loss on noisy data, we have
\[
\begin{split}
 \mathbb E&_{(\bm x,\tilde{y})}[\ell_{CE}(g(\bm x;\Theta), \tilde{y})] =  -\mathbb E_{(\bm x,\tilde{y})}\Big[\sum_{i=1}^c \tilde{d}_i\log(\hat{p}_i(\bm x))\Big]\\
 {} &=  -\int_{ \bm x\in\mathcal{X}} \sum_{j=1}^c  \Big[\sum_{i=1}^c \tilde{d}_i\log(\hat{p}_i(\bm x))\Big]p(\bm x, \tilde{y} = j) \mathrm{d}\bm x\\
    {} &=  -\int_{ \bm x\in\mathcal{X}} \Big[\sum_{i=1}^c \Big[\sum_{j=1}^c \tilde{d}_ip(\tilde{y}=j|\bm x) \Big]\log(\hat{p}_i(\bm x))\Big]p(\bm x) \mathrm{d}\bm x\\
{} &=  -\int_{ \bm x\in\mathcal{X}} \Big[\sum_{i=1}^c \mathbb E[\tilde{d}_i|\bm x]\log(\hat{p}_i(\bm x))\Big]p(\bm x) \mathrm{d}\bm x,\\
\end{split}
\]
where the last equation is due to $\mathbb E[\tilde{d}_i|\bm x] = \sum_{j=1}^c \tilde{d}_ip(\tilde{y}=j|\bm x)$. Note that when $\mathbb E_{(\bm x,\tilde{y})}[\ell_{CE}(g(\bm x; \Theta), \tilde{y})]$ is minimized, $-\sum_{i=1}^c\mathbb E[\tilde{d}_i|\bm x]\log(\hat{p}_i(\bm x))$ is also minimized for each $\bm x \in \mathcal{X}$. For cross-entropy loss, $0\le \hat{p}_i(\bm x)\le 1$ and $\sum_{i=1}^c \hat{p}_i(\bm x) = 1$. It is formalized as the following optimization problem:
\[
\begin{split}
\min_{[\hat{p}_1(\bm x), \dots, \hat{p}_c(\bm x)]} &\, -\sum_{i=1}^c\mathbb E[\tilde{d}_i|\bm x]\log(\hat{p}_i(\bm x)),\\
\mathrm{s.t.}\quad & \sum_{i=1}^c \hat{p}_i(\bm x) =1 ,\, 0\le \hat{p}_i(\bm x) \le 1.\\
\end{split}
\]
It can be found that $-\sum_{i=1}^c\mathbb E[\tilde{d}_i|\bm x]\log(\hat{p}_i(\bm x))$ is minimized when $ \hat{p}_i(\bm x) = \mathbb E[\tilde{d}_i|\bm x]$, $\forall\, 1\le i \le c$ by Lagrange multiplier method~\cite{boyd2014convex}. Furthermore, $\mathbb E[\tilde{d}_i|\bm x] = \sum_{j=1}^c \mathbb I[i=j]p(\tilde{y}=j|\bm x) = p(\tilde{y}=i|\bm x)$, and thus $\hat{p}_i(\bm x) = p(\tilde{y}=i|\bm x)$.
Thus for example $(\bm x, \tilde{y})$, we have $\hat{p}_k(\bm x) = \mathbb E[\tilde{d}_k|\bm x] = p(\tilde{y}=k|\bm x)$.

When the class-conditional noise assumption holds, we have
\[
\begin{split}
p(\tilde{y}=k|\bm x) &= \sum_{i=1}^c p(\tilde{y}=k, y=i|\bm x)\\
{}&=  \sum_{i=1}^c p(y=i|\bm x) p(\tilde{y}=k|y=i, \bm x) \\
{}&=  \sum_{i=1}^c p(y=i|\bm x) p(\tilde{y}=k|y=i) \\
{}&= p(\tilde{y}=k|y=f^*(\bm x)) \\
{}&= T_{f^*(\bm x)k},\\
\end{split}
\] 
where the third equation is due to the class-conditional noise assumption and the forth equation is due to that each $\bm x$ has a true label $f^*(\bm x)$. Thus we know that the softmax output of $g^*$ satisfies $\hat{p}_k(\bm x) = T_{f^*(\bm x)k}$ for $\bm x\in\mathcal{X}$. The prediction induced by the network $g^*$ is
\[f_{g^*}(\bm x)=\mathop{\arg\max}_k \hat{p}_k(\bm x).\]
It is easy to find that $f_{g^*}(\bm x)=y=f^*(\bm x)$ is equivalent to $T_{f^*(\bm x)f^*(\bm x)} >T_{f^*(\bm x)k}$, $\forall\, k\neq f^*(\bm x)$, \ie, $ T_{ii} > T_{ij}$, $\forall\, j\neq i$, by considering all classes. Thus Lemma~\ref{lemma:row-dominant} is proved.
\end{proof}

\begin{theorem}\label{thm:the-small-loss}
Let $g^*$ denote the deep neural network minimizing the cross-entropy loss in Eq.~(1), $(\bm x_1,\tilde{y})$ and $(\bm x_2,\tilde{y})$ are any two examples with the same observed label $\tilde{y}$ in $\tilde{D}$ satisfying that $f^*(\bm x_1)=\tilde{y}$ and $f^*(\bm x_2)\neq \tilde{y}$, 
if $T$ satisfies the diagonally-dominant condition $T_{ii} > \max\,\{\max_{j \neq i}T_{ij}, \,\,\max_{j\neq i}T_{ji}\}$, $\forall i$, then $\ell_{CE}(g^*(\bm x_1), \tilde{y})<\ell_{CE}(g^*(\bm x_2), \tilde{y})$. 
\end{theorem}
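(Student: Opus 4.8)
The plan is to leverage the exact characterization of $g^*$'s softmax outputs obtained in the proof of Lemma~\ref{lemma:row-dominant}. There it is established that the minimizer of the expected cross-entropy loss satisfies $\hat{p}_k(\bm x) = p(\tilde{y}=k|\bm x) = T_{f^*(\bm x)k}$ for every $\bm x\in\mathcal{X}$ and every class $k$. I would simply reuse this identity rather than re-derive it, which turns the whole theorem into an algebraic comparison of two entries of $T$.

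First I would write the cross-entropy loss of $g^*$ on a generic example $(\bm x, \tilde{y})$ in closed form. Since $\ell_{CE}(g^*(\bm x), \tilde{y}) = -\log \hat{p}_{\tilde{y}}(\bm x)$ and $\hat{p}_{\tilde{y}}(\bm x) = T_{f^*(\bm x)\tilde{y}}$, the loss becomes $-\log T_{f^*(\bm x)\tilde{y}}$; that is, it depends on the example only through its true label $f^*(\bm x)$ and the shared observed label $\tilde{y}$. Evaluating at the two given points yields $\ell_{CE}(g^*(\bm x_1), \tilde{y}) = -\log T_{\tilde{y}\tilde{y}}$ (using $f^*(\bm x_1)=\tilde{y}$) and $\ell_{CE}(g^*(\bm x_2), \tilde{y}) = -\log T_{f^*(\bm x_2)\tilde{y}}$ (using $f^*(\bm x_2)\neq\tilde{y}$).

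Because $-\log$ is strictly decreasing, the desired inequality $\ell_{CE}(g^*(\bm x_1), \tilde{y}) < \ell_{CE}(g^*(\bm x_2), \tilde{y})$ is equivalent to the purely algebraic claim $T_{\tilde{y}\tilde{y}} > T_{f^*(\bm x_2)\tilde{y}}$. This is where the hypothesis must be invoked, and it is the one genuinely delicate point: the two entries being compared lie in the \emph{same column} $\tilde{y}$ of $T$, so it is the column-dominance half of the assumption---$T_{ii} > \max_{j\neq i} T_{ji}$---that is needed, not the row-dominance half that sufficed for Lemmas~\ref{lemma:1988} and~\ref{lemma:row-dominant}. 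Specializing this half to $i=\tilde{y}$ gives $T_{\tilde{y}\tilde{y}} > \max_{k\neq\tilde{y}} T_{k\tilde{y}} \ge T_{f^*(\bm x_2)\tilde{y}}$, where the final inequality holds because $f^*(\bm x_2)\neq\tilde{y}$ makes $f^*(\bm x_2)$ an admissible index in the maximum. This closes the argument.

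I do not anticipate a serious obstacle in the calculations themselves; the substance of the result lies in recognizing that the stronger, two-sided diagonally-dominant condition stated in the theorem is exactly what is required to order losses \emph{across different true classes that share one noisy label}. This is precisely the ordering that makes the small-loss criterion meaningful for separating clean examples from corrupted ones, so the proof's value is in pinpointing which direction of dominance does the work rather than in any technical heavy lifting.
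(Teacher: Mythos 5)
Your proposal is correct and follows essentially the same route as the paper's proof: both reuse the identity $\hat{p}_k(\bm x) = T_{f^*(\bm x)k}$ established in the proof of Lemma~\ref{lemma:row-dominant}, reduce each loss to $-\log T_{f^*(\bm x)\tilde{y}}$, and then invoke the column-dominance half of the condition, $T_{\tilde{y}\tilde{y}} > \max_{j\neq\tilde{y}} T_{j\tilde{y}}$, to compare the two entries in column $\tilde{y}$. Your observation that it is specifically the column-dominance direction doing the work here (unlike in Lemmas~\ref{lemma:1988} and~\ref{lemma:row-dominant}) matches the paper exactly.
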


\begin{proof}
When the diagonally-dominant condition is satisfied, \ie, $\forall i$, $T_{ii} > \max\,\{\max_{j \neq i}T_{ij}, \,\,\max_{j\neq i}T_{ji}\}$, we first have $\forall i$, $T_{ii} > \max_{j \neq i}T_{ij}$. Thus according to Lemma~\ref{lemma:row-dominant}, the induced classifier $f_{g^*}$ satisfies $\forall \bm x\in\mathcal{X}$, $f_{g^*}(\bm x)=y$. Furthermore, from the proof process of Lemma~\ref{lemma:row-dominant}, we know that the softmax output of $g^*$ satisfies $\hat{p}_k(\bm x) = T_{f^*(\bm x)k}$ for $\bm x\in\mathcal{X}$.
Given any two examples $(\bm x_1,\tilde{y})$ and $(\bm x_2,\tilde{y})$ in $\tilde{D}$ with the same observed label $\tilde{y}$ satisfying that $f^*(\bm x_1)=\tilde{y}$ and $f^*(\bm x_2)\neq \tilde{y}$,  
the softmax ouput of $g^*$ for $(\bm x_1, \tilde{y})$ is \[
	[\hat{p}_1(\bm x_1), \dots, \hat{p}_{c}(\bm x_1)] =  [T_{f^*(\bm x_1)1}, \dots, T_{f^*(\bm x_1)c}].\]
	The loss value of $g^*$ on $(\bm x_1,\tilde{y})$ is
	\[\ell_{CE}(g^*(\bm x_1),\tilde{y})=-\log(\hat{p}_{\tilde{y}}(\bm x_1))=-\log(T_{f^*(\bm x_1)\tilde{y}}),\]
	and the loss value of $g^*$ on $(\bm x_2,\tilde{y})$ is 
	\[\ell_{CE}(g^*(\bm x_2),\tilde{y})=-\log(\hat{p}_{\tilde{y}}(\bm x_2))=-\log(T_{f^*(\bm x_2)\tilde{y}}).\]
	According to the diagonally-dominant condition, we also have $T_{\tilde{y}\tilde{y}}>\max_{j\neq \tilde{y}}T_{j\tilde{y}}$. Since $f^*(\bm x_1)=\tilde{y}$ while $f^*(\bm x_2)\neq\tilde{y}$, we have $\ell_{CE}(g^*(\bm x_1),\tilde{y})=-\log(T_{f^*(\bm x_1)\tilde{y}}) = -\log(T_{\tilde{y}\tilde{y}}) < -\log(T_{f^*(\bm x_2)\tilde{y}}) = \ell_{CE}(g^*(\bm x_2),\tilde{y})$.	
\end{proof}

\newpage
\begin{theorem}\label{thm:weak-small-loss}
Suppose $g$ is $\epsilon$-close to $g^*$, i.e., $\|g-g^*\|_\infty = \epsilon$, for two examples $(\bm x_1, \tilde{y})$ and $(\bm x_2, \tilde{y})$, assume $f^*(\bm x_1)=\tilde{y}$ and $f^*(\bm x_2)\neq \tilde{y}$, if $T$ satisfies the diagonally-dominant condition $T_{ii} > \max\,\{\max_{j \neq i}T_{ij}, \,\,\max_{j\neq i}T_{ji}\}$, $\forall i$, and $\epsilon < \frac{1}{2}\cdot(T_{\tilde{y}\tilde{y}}-T_{f^*(\bm x_2)\tilde{y}})$, then $\ell_{CE}(g(\bm x_1), \tilde{y}) < \ell_{CE}(g(\bm x_2), \tilde{y})$. 
\end{theorem}

\begin{proof}
With $\|g-g^*\|_\infty = \epsilon$, we have: $[g^*(\bm x)]_j - \epsilon \le [g(\bm x)]_j \le [g^*(\bm x)]_j + \epsilon,\, \forall\, \bm x \in \mathcal{X}, \,\forall\, 1\le j \le c$.
Thus $\ell_{CE}(g(\bm x_1), \tilde{y}) = -\log([g(\bm x_1)]_{\tilde{y}}) \le -\log([g^*(\bm x_1)]_{\tilde{y}} - \epsilon) = -\log(T_{\tilde{y}\tilde{y}} - \epsilon)$, and $\ell_{CE}(g(\bm x_2), \tilde{y}) = -\log([g(\bm x_2)]_{\tilde{y}}) \ge -\log([g^*(\bm x_2)]_{\tilde{y}} + \epsilon) = -\log(T_{f^*(\bm x_2)\tilde{y}} + \epsilon)$.
When $\epsilon < \frac{1}{2}\cdot(T_{\tilde{y}\tilde{y}}-T_{f^*(\bm x_2)\tilde{y}})$, we have $-\log(T_{\tilde{y}\tilde{y}} - \epsilon) \le -\log(T_{f^*(\bm x_2)\tilde{y}} + \epsilon)$, and thus $\ell_{CE}(g(\bm x_1), \tilde{y}) < \ell_{CE}(g(\bm x_2), \tilde{y})$.
Furthermore, actually we have
\[
\begin{split}
&\ell_{CE}(g(\bm x_2), \tilde{y})-\ell_{CE}(g(\bm x_1), \tilde{y})
\ge \log\big(\frac{T_{\tilde{y}\tilde{y}} - \epsilon}{T_{f^*(\bm x_2)\tilde{y}} + \epsilon}\big), \\
\end{split}
\]
\ie, the loss gap between $(\bm x_1, \tilde{y})$ and $(\bm x_2, \tilde{y})$ depends on $\epsilon$.
Considering all classes, when $\epsilon < \frac{1}{2}\cdot\min_{1\le i\le c} \big(T_{ii} - \max_{j\neq i}T_{ji}\big)$, for any examples with the same observed labels, the correct examples have smaller loss than the incorrect ones.
\end{proof}

\newpage
\section{Details of Empirical Findings}

For verification of theoretical explanation, we do empirical studies by training PreAct ResNet-32~\cite{tanaka2018joint} on synthetic CIFAR-10 with different noise types and levels. The image data is augmented by horizontal random flip and $32\times32$ random crops after padding with 4 pixels. For the optimizer, we deploy SGD with momentum of $0.9$ and a weight decay of $10^{-4}$. The batch size is set to $128$. The network is trained on the whole noisy dataset with an initial learning rate $0.2$ for CIFAR-10 and divided by 10 after 40 and 80 epochs (120 in total).

\subsection{Loss distribution}\label{appendix:loss-distribution}

Theorems~1 and 2 
implies that when the noise transition matrix $T$ satisfies the diagonally-dominant condition, for the examples with the same observed label, the correct examples will have smaller loss than that of incorrect ones. 
We consider uniform label noise, pairwise label noise and structured label noise with noise parameter $r=0.3$, $r=0.4$, $r=0.5$, $r=0.6$ on CIFAR-10, some of which satisfy diagonally-dominant condition while others do not satisfy this condition (cf. Figure~1).
We calculate each example's mean loss value $\bar{\ell}(\bm x, \tilde{y})$ and normalize them to $[0,1]$ by dividing the maximum of loss values for better representation. Besides, to exhibit the shape of loss distribution, we use kernel density estimation~\cite{bishop2006pattern} with gaussian kernel to fit  the distribution of loss values: 
\[p_{kde}(s) = \frac{1}{N}\sum_{i=1}^N\frac{1}{(2\pi h^2)^{1/2}}\exp\big\{-\frac{(s-s_i)^2}{2h^2}\big\},\]
where $h$ represents the standard deviation of the Gaussian components, $s_i$ represents the mean loss value of the $i$-th example along all training epochs and $N$ represents the total number of noisy examples.

As shown in Figure~2 in Section~5.1 of the main paper,  loss distributions of three noise types are quite different. For uniform label noise, the loss distribution presents a clear bimodal structure. For pairwise label noise, the loss distribution presents an unclear bimodal structure. And for structured label noise, the loss distribution shows a somewhat irregular shape, where many examples have very small loss values and the rest part has relatively large loss value, which is due to there exist some classes whose noise rates are zeros.

For uniform label noise, the diagonally-dominant condition holds for any noise parameter $r<1.0$. In Figure~2(a), for $r=0.3$, $r=0.4$, $r=0.5$ and $r=0.6$, the examples with correct label are relatively well separated from that with incorrect label. For pairwise noise (Figure~2(b)), the diagonally-dominant condition holds when noise parameter $r<0.5$. We can see that for $r=0.3$ and $r=0.4$, in general the examples with correct label have smaller loss than that with incorrect label. But for $r=0.5$ and $r=0.6$, many examples with incorrect label have smaller loss than that with correct label. For structured noise (Figure~2(c)), when $r\ge 0.5$, diagonally-dominant condition does not hold for some classes and some examples with incorrect label have smaller loss than that with correct label. The experimental results verify the necessity of diagonally-dominant condition in Theorems~1 and 2.

\subsection{The loss for different classes}
For uniform ($r=0.4$), pairwise ($r=0.4$) and structured ($r=0.4$) label noise, we plot the mean values of the mean loss of correct examples and incorrect ones for each class in Figure~3 in Section~5.1 of the main paper. 
It can be found that the losses for different classes are not comparable, especially for more realistic structured label noise, which justifies the necessity of ranking the loss of the examples \emph{class by class}. 

\subsection{Mean loss and single epoch’s loss}

To observe the characteristics of the single epoch’s loss and the mean loss, we randomly select many pairs of correct example and incorrect ones with the same observed labels, and plot their each epoch’s loss and the cumulative mean loss curves. We consider uniform noise, pairwise noise and structured noise with noise parameter $r=0.4$. As shown in Figure~4 in Section~5.1 of the main paper and the following Figures~\ref{fig:additional_1}-\ref{fig:additional_5}, we can find that correct examples have smaller loss than incorrect ones with the same noisy labels, which further verifies our theoretical explanation. And there exist large fluctuations for single epoch’s loss, which implies that single epoch’s loss may be not reliable enough for sample selection. We can observe that the mean loss is more stable than a single epoch’s loss in experiments. This verifies the effectiveness of using the \emph{mean} loss to select small-loss examples. 

\begin{figure}[h] %
\setlength{\belowcaptionskip}{-0.3cm}   
\centering
\includegraphics[width=\linewidth]{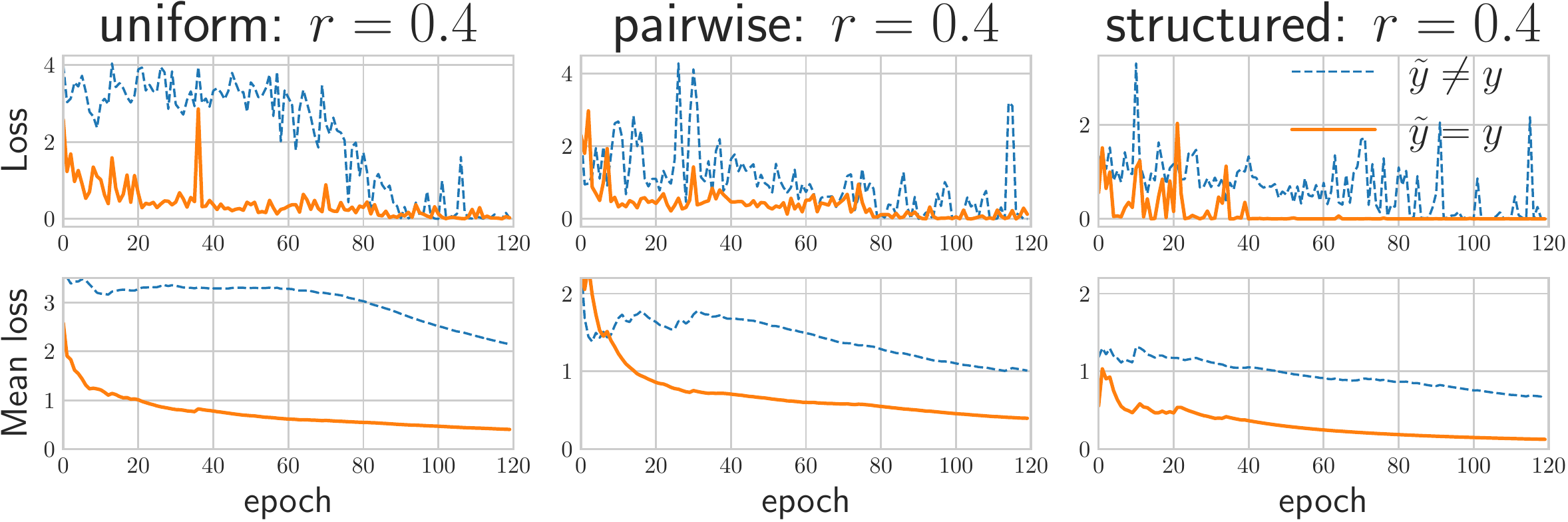} 
\caption{(Additional figure.) Each epoch's loss and the cumulative mean loss of correct example and incorrect example.}
\label{fig:additional_1}
\end{figure}

\begin{figure}[h] %
\setlength{\belowcaptionskip}{-0.3cm}   
\centering
\includegraphics[width=\linewidth]{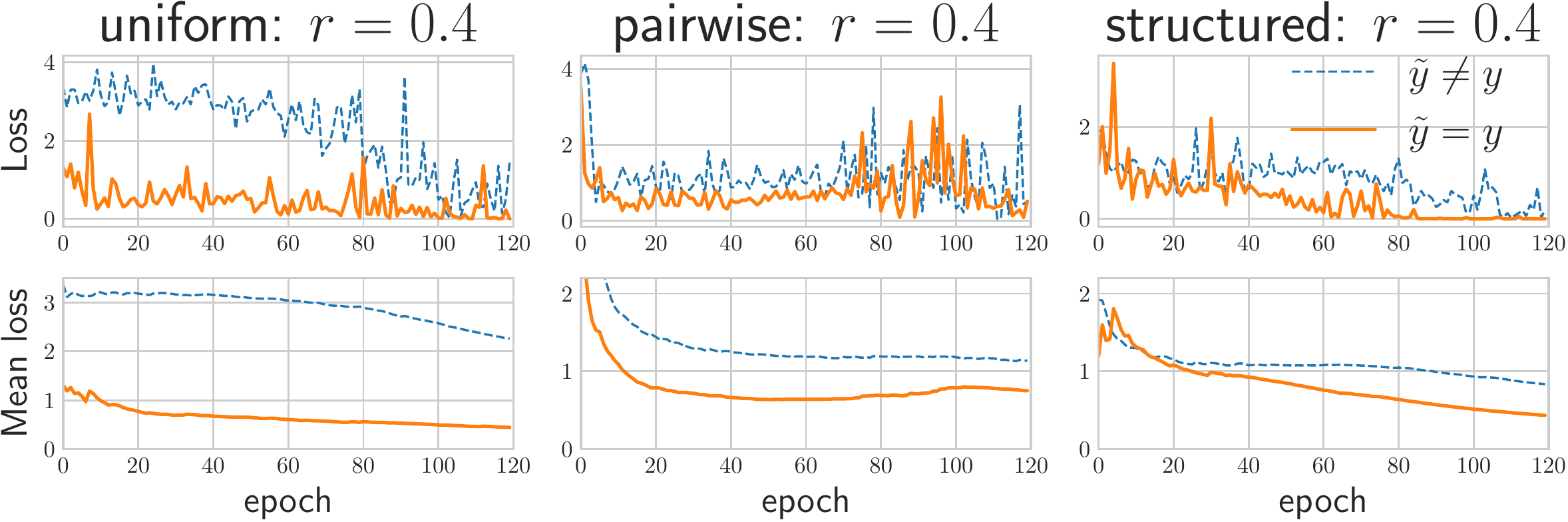}
\caption{(Additional figure.) Each epoch's loss and the cumulative mean loss of correct example and incorrect example.}
\label{fig:additional_2}
\end{figure}

\begin{figure}[h] %
\setlength{\belowcaptionskip}{-0.3cm}  
\centering
\includegraphics[width=\linewidth]{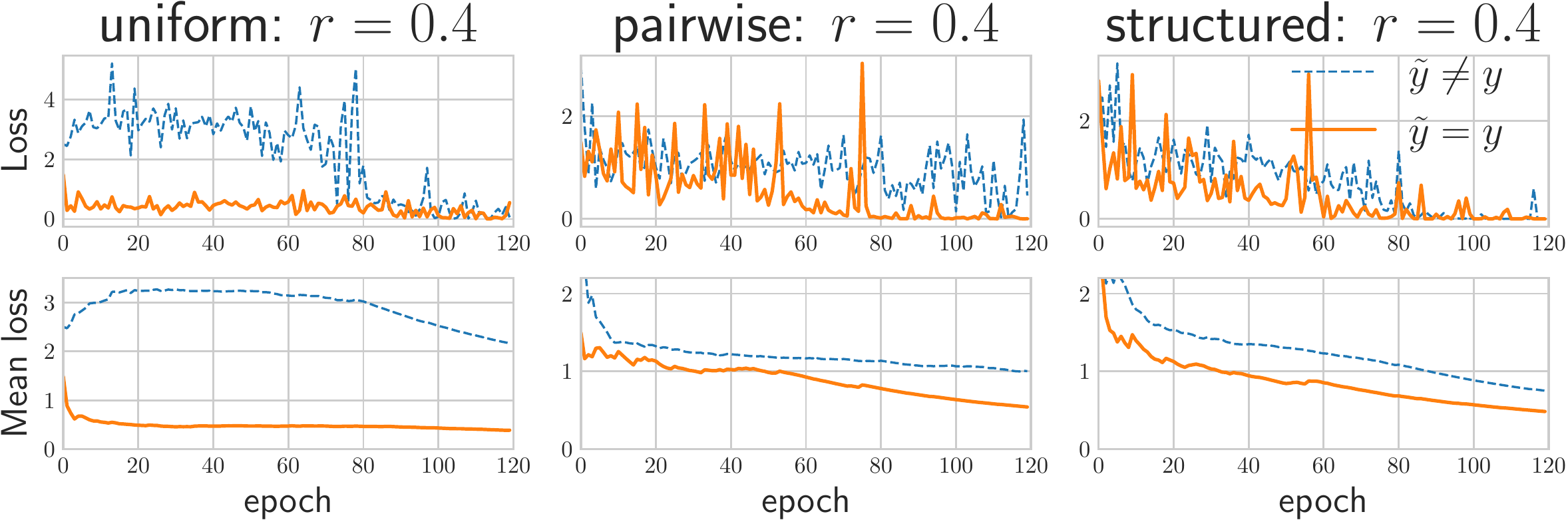} 
\caption{(Additional figure.) Each epoch's loss and the cumulative mean loss of correct example and incorrect example.}
\label{fig:additional_3}
\end{figure}

\begin{figure}[h] %
\centering
\includegraphics[width=\linewidth]{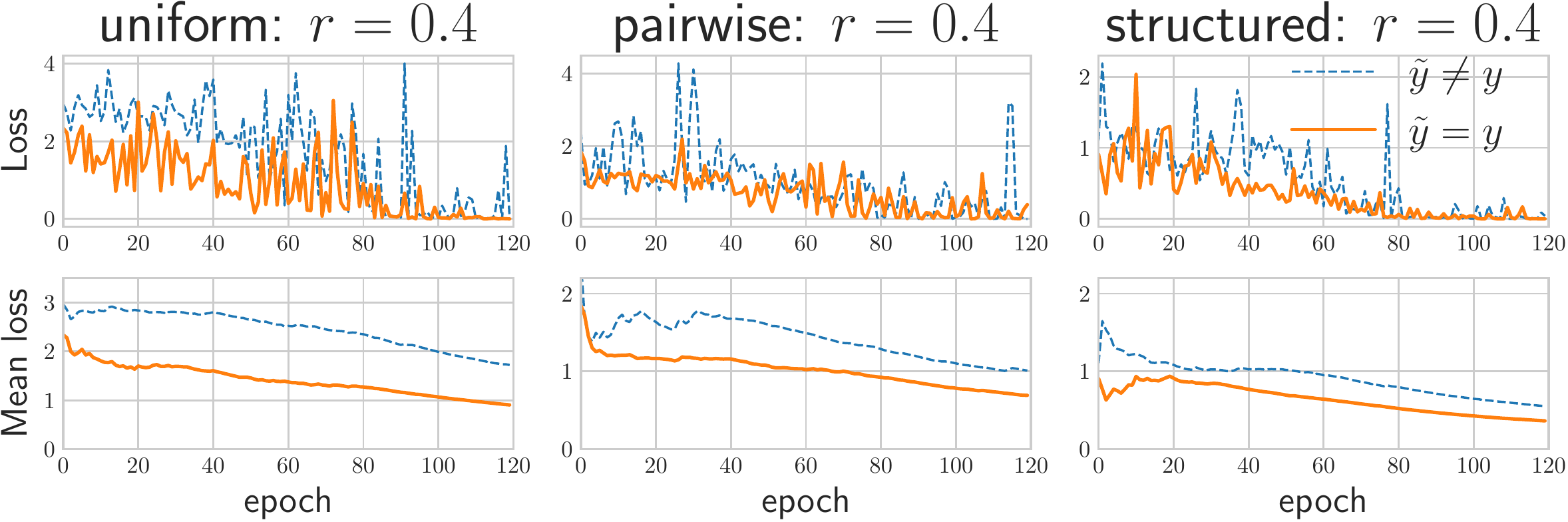}
\caption{(Additional figure.) Each epoch's loss and the cumulative mean loss of correct example and incorrect example.}
\label{fig:additional_4}
\end{figure}

\begin{figure}[h] %
\centering
\includegraphics[width=\linewidth]{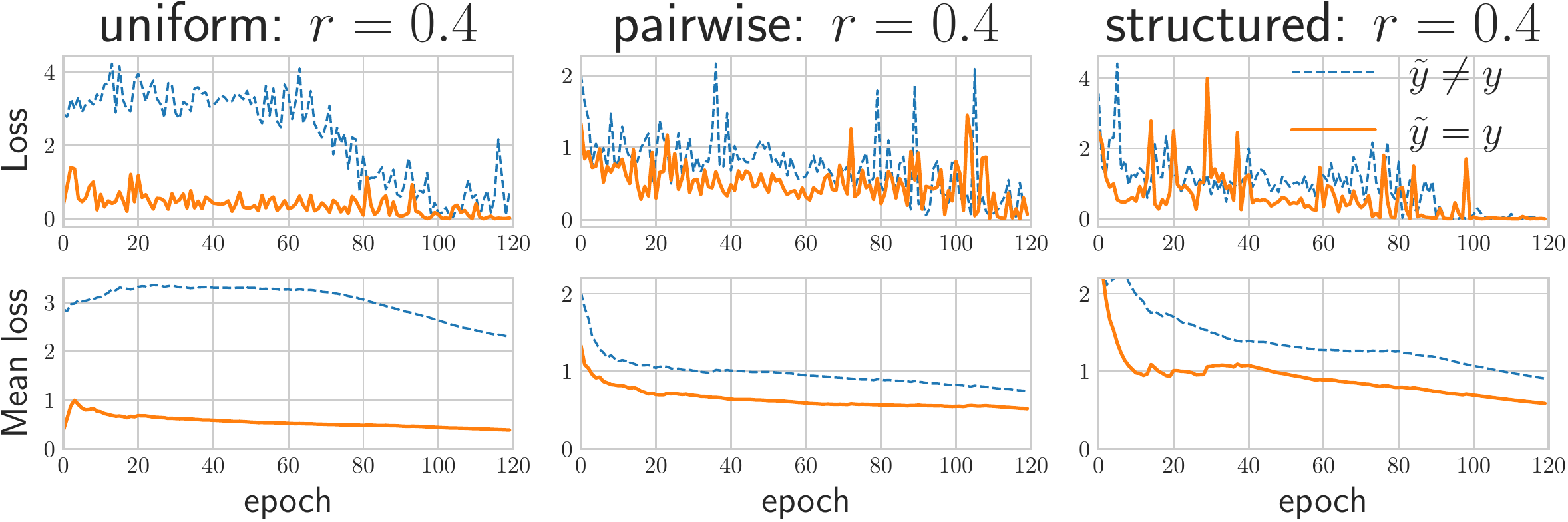} 
\caption{(Additional figure.) Each epoch's loss and the cumulative mean loss of correct example and incorrect example.}
\label{fig:additional_5}
\end{figure}

\newpage
\subsection{Different $g$ with different distances  to $g^*$ }

Theorem~\ref{thm:weak-small-loss} implies when the model $g$ is closer to $g^*$ (\ie, $\epsilon$ is smaller), the loss gap between correct examples and incorrect ones will be bigger and they can be better separated. Now we try to investigate in practice how different $g$ with different distances  to $g^*$ behaves with small-loss criterion.
Since in general, when trained with a bigger training set sampled from the noisy data distribution, the model $g$ will be closer to $g^*$ which minimizes the expected loss on noisy data. Thus we train models using different sizes of the training set to simulate different $g$ with different distances to $g^*$. We plot the model’s training accuracy and test accuracy of each epoch in the training process, the corresponding mean loss values for correct examples and incorrect ones, and the precision of the final selected data (all with the same selection ratio) when using $\frac{1}{4}$, $\frac{1}{2}$, $\frac{3}{4}$ and $1$ of the original noisy training dataset respectively. 
The training dynamics and performances for the uniform label noise ($r=0.4$) is shown in Figure~5 in Section~5.1 of the main paper. 
In Figures~\ref{fig:pairwise_diff_size} and \ref{fig:structured_diff_size}, we show the additional figures for pairwise label noise ($r=0.4$) and structured label noise ($r=0.4$) respectively. It can also be found in these figures that when $g$ is closer to $g^*$, the loss gap between correct examples and incorrect examples is bigger and the precision of the selected data is higher, which verifies that the small-loss criterion will have better utility when the model $g$ is closer to $g^*$.

\begin{figure}[h] %
\setlength{\abovecaptionskip}{-0.005cm}  
\setlength{\belowcaptionskip}{-0.3cm}   
\centering
\includegraphics[width=\linewidth]{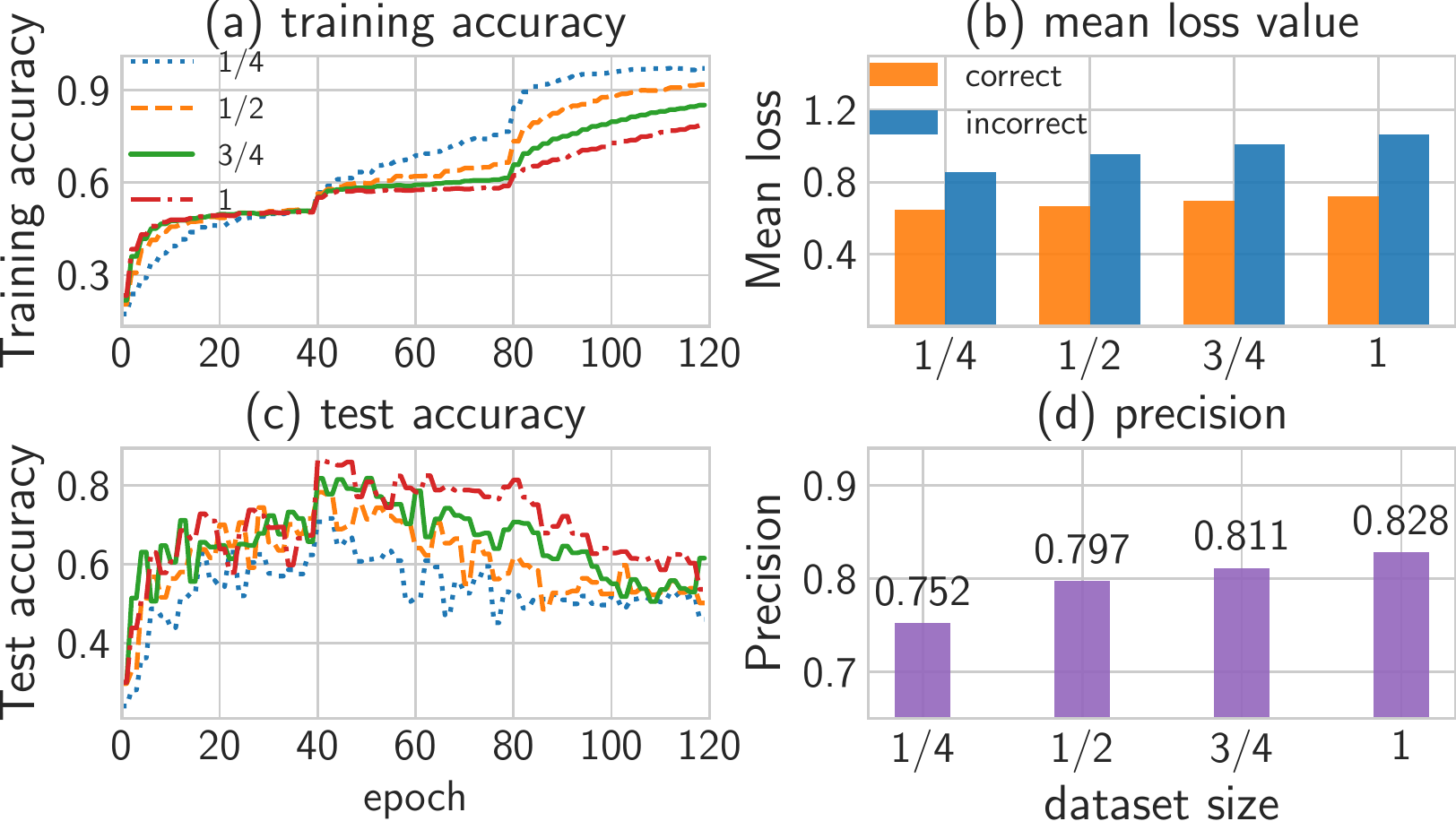}
\caption{Different $g$ with different distances  to $g^*$ (pairwise noise: $r=0.4$).}
\label{fig:pairwise_diff_size}
\end{figure}

\begin{figure}[h] %
\setlength{\belowcaptionskip}{-0.3cm}
\centering
\includegraphics[width=\linewidth]{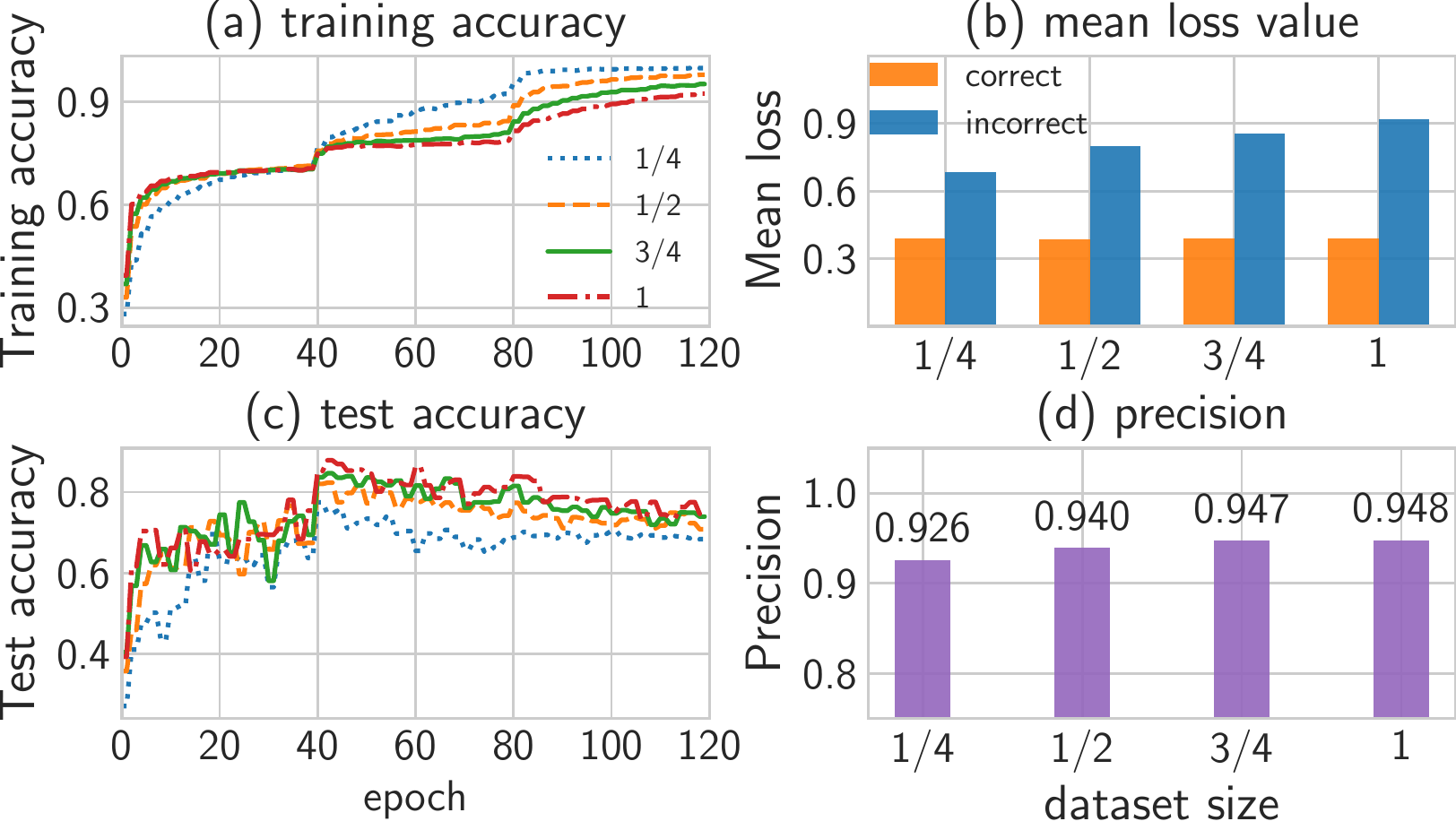}
\caption{Different $g$ with different distances s to $g^*$ (structured noise: $r=0.4$).}
\label{fig:structured_diff_size}
\end{figure}

\section{Evaluations on Benchmark}  
\subsection{Implementation details}
We evaluate the performance of the proposed method on benchmark datasets.
As that in Co-teaching, Co-teaching+ and JoCoR, we also assume that the noise rates are known.
INCV does not need the noise rates, since it embeds a method to estimate the noise rates. $\mathcal{L}_{\text{DMI}}$ and Truncated $\mathcal{L}_q$ are based on new loss functions, and do not need the noise rates.
For fair comparison, all compared methods are carefully implemented with their open-source codes to obey the same setting as ours. 
We also implement ``Cross Entropy'' method, which does not use any special treatments for label noise. The ``best’’ and ``last’’ results are reported for exhibiting the robustness to noisy labels.
Our method is implemented by Pytorch. We use the ``Wide ResNet-28'' model~\cite{oliver2018realistic} for CIFAR-10 and CIFAR-100 following~\cite{berthelot2019mixmatch}. The image data is augmented by horizontal random flip and $32\times32$ random crops after padding with 4 pixels. For the optimizer, we deploy SGD with momentum of $0.9$ and weight decay of $5\times 10^{-4}$. The batch size is set to $128$. We use the default parameters $\beta=0.2$, $\gamma =(\gamma_0 + \gamma_1)/2$ and $\kappa=-\log(0.7)$ in experiments. 
In the first stage, the network is trained on the whole noisy dataset with initial learning rate $0.1$ and multiplied by 0.02 after 60, 120 and 160 epochs (200 in total). After each epoch's training, the loss of each example is calculated. The mean loss is used to rank the examples class by class. For each class, we select a part of data that has minimum mean loss according to the proportion calculated in Alg.~1. For safety and unbiased class distribution, our method selects less data than Co-teaching and JoCoR which set the proportion as $1-\eta$, where $\eta$ is the true noise rate. Then, for RSL, we train the model on the selected data with the same optimization manner as the first stage. While for RSL\_WM, we adopt default hyperparameter setting in \cite{berthelot2019mixmatch} for Weighted\_MixMatch to train the model for fair comparison.

\begin{table*}[t]
\centering
\caption{The relative number of selected data for structured noise CIFAR-10.}
\label{tab:my-table}
\resizebox{\textwidth}{!}{%
\begin{tabular}{c|cccccccccc}
\hline
Relative number & \multicolumn{10}{c}{Structured label noise CIFAR-10 (noise parameter $r$)} \\ \hline
\multicolumn{1}{c|}{class} & 0 & 1 & 2 & 3 & 4 & 5 & 6 & 7 & 8 & 9 \\ \hline
\multicolumn{1}{c|}{$\gamma \ge \frac{1}{1-(1+\beta)r}$} & $1-\beta r$ & $1-\beta r$ & $1-r$ & $1-(1+\beta)r$ & $1-r$ & $1-(1+\beta)r$ & $1$ & $1-\beta r$ & $1$ & $1-r$ \\ \hline
\multicolumn{1}{c|}{$\gamma = 1$} & $1-(1+\beta)r$ & $1-(1+\beta)r$ & $1-(1+\beta)r$ & $1-(1+\beta)r$ & $1-(1+\beta)r$ & $1-(1+\beta)r$ & $1-(1+\beta)r$ & $1-(1+\beta)r$ & $1-(1+\beta)r$ & $1-(1+\beta)r$ \\ \hline
\multicolumn{1}{c|}{$\gamma = (1 +\frac{1}{1-(1+\beta)r})/2$} & $1-\frac{1+\beta}{2}r$ & $1-\frac{1+\beta}{2}r$ & $1-r$ & $1-(1+\beta)r$ & $1-r$ & $1-(1+\beta)r$ & $1-\frac{1+\beta}{2}r$ & $1-\frac{1+\beta}{2}r$ & $1-\frac{1+\beta}{2}r$ & $1-r$ \\ \hline
\end{tabular}%
}
\end{table*}

WebVision dataset has rough $20\%$ noisy labels, and no further information about the noise distribution is known. Following \cite{jiang2018mentornet,chen2019understanding}, we use Inception-ResNet-v2~\cite{szegedy2017inception} as the base model. Since the compared methods in their original papers only report the ``best'’ accuracy, we also only report it. For our method, in the first stage, we train the network for $120$ epochs and use SGD optimizer with an initial learning rate $0.1$, which is divided by $10$ after $40$ and $80$ epochs. We rank all examples by their mean loss class by class and select the first $76\%$ ($\beta=0.2$) examples for each class. Then, for RSL, we train the model on the selected data with the same optimization manner as the first stage. While for RSL\_WM, we use Weighted\_MixMatch to train the final model with all the default parameters. We also compare with F-correction~\cite{patrini2017making}, Decoupling~\cite{malach2017decoupling}, MentorNet~\cite{jiang2018mentornet}, and D2L~\cite{ma2018dimensionality} since they have the same experimental setting as ours on WebVision dataset.

\subsection{The Hyperparameters $\beta$ and $\gamma$}\label{sup:eq4-5}

For the $i$-th class with noise rate $\eta_i$, it is reasonable that the selected proportion $\textit{prop}(i)$ is a little less than $1-\eta_i$. In Section~4 of the main paper, we firstly propose to set $\textit{prop}(i) = \max \{1-(1+\beta)\eta_i,\,\, (1-\beta)(1-\eta_i)\}$, where $0\le\beta\le1$ is a parameter which can be set as $0.2$, and bigger $\beta$ implies that we select less data for safety. In here, we further explain it and investigate the effect of $\beta$ on the selected data. Actually, the above equation is equivalent that if $\eta_i\le0.5$, we set $\textit{prop}(i) =1-(1+\beta)\eta_i$; while if $\eta_i>0.5$, we set $\textit{prop}(i) =(1-\beta)(1-\eta_i)$. Figure~\ref{fig:beta} depicts the relationship between $\textit{prop}(i)$ and $\eta_i$ when the hyperparameter $\beta\in\{0$, $0.1$, $0.2$, $0.3\}$. 
\begin{figure}[H] %
\setlength{\abovecaptionskip}{-0.0cm}  
\setlength{\belowcaptionskip}{-0.3cm}   
\centering
\includegraphics[width=0.5\linewidth]{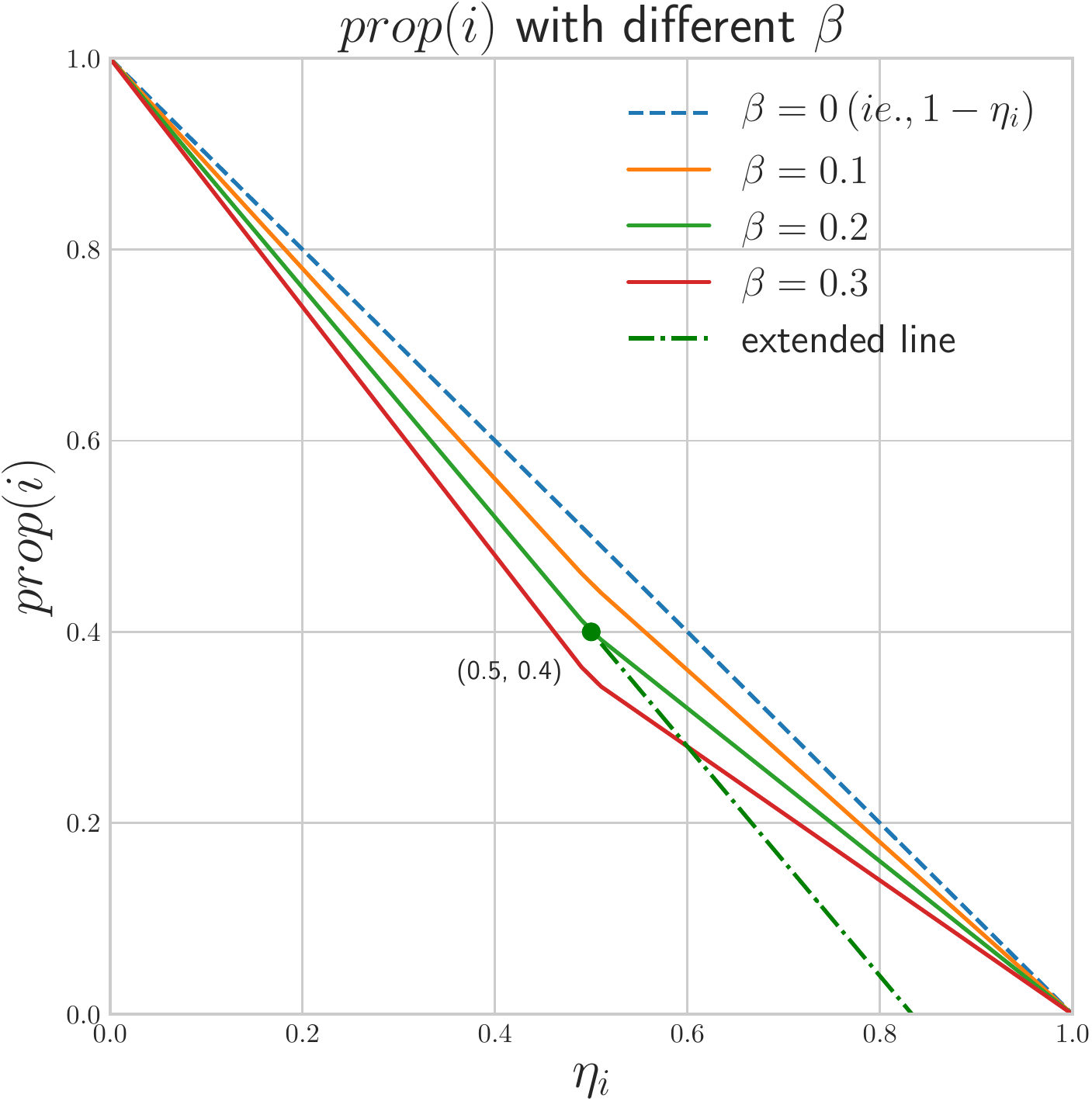}
\caption{The schemes of $\textit{prop}(i)$ with different $\beta$.}
\label{fig:beta}
\end{figure}
In Section~4 of the main paper, we finally use Eq.~(2)
to further adjust the selected number for each class to alleviate the class distribution shift, \ie, $\textit{num}(i) = \min \{\gamma\cdot p_i\times m, \,\, \textit{prop}(i)\times n_i\}$. Denote $\gamma_0 =1$ and $\gamma_1=\max_{1\le i\le c}\{\frac{\textit{prop}(i)\cdot n_i}{p_i\cdot m}\}$, if $\gamma = \gamma_0$, then $\textit{num}(i) = p_i\times m$ which exactly matches the distribution $[p_1, \dots, p_c]$ but may waste many useful data;
if $\gamma\ge\gamma_1$, then $\textit{num}(i)$ collapses to $\textit{prop}(i)\times n_i$. In practice, setting $\gamma = (\gamma_0+\gamma_1)/2$ may be a reasonable choice.

\begin{figure*}[t] %
\centering
\subfigure[uniform label noise]{
\begin{minipage}[t]{0.318\linewidth}
\centering
\includegraphics[width=\linewidth]{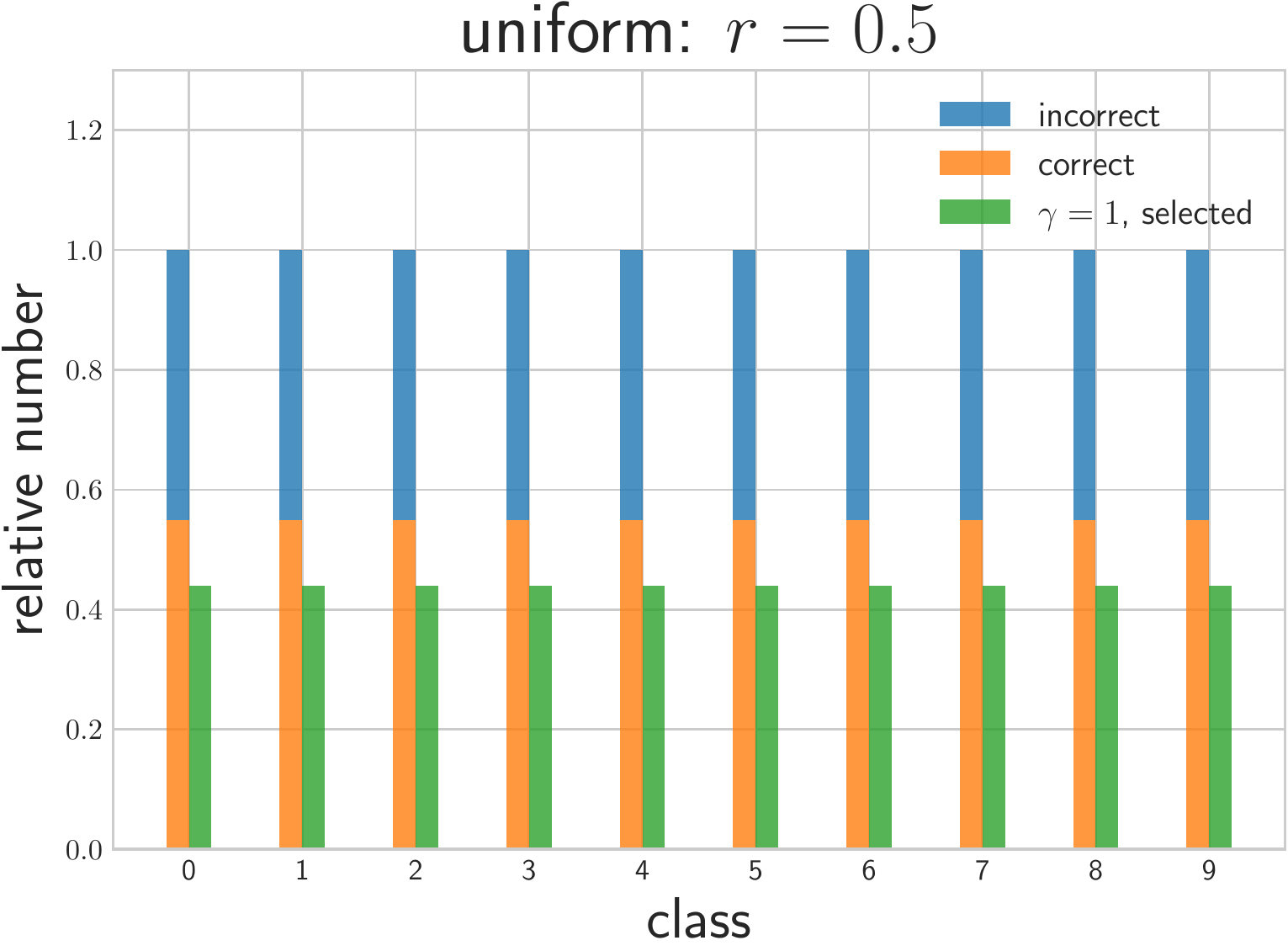}
\label{fig:UN-num}
\end{minipage}%
}
\subfigure[pairwise label noise ]{
\begin{minipage}[t]{0.318\linewidth}
\centering
\includegraphics[width=\linewidth]{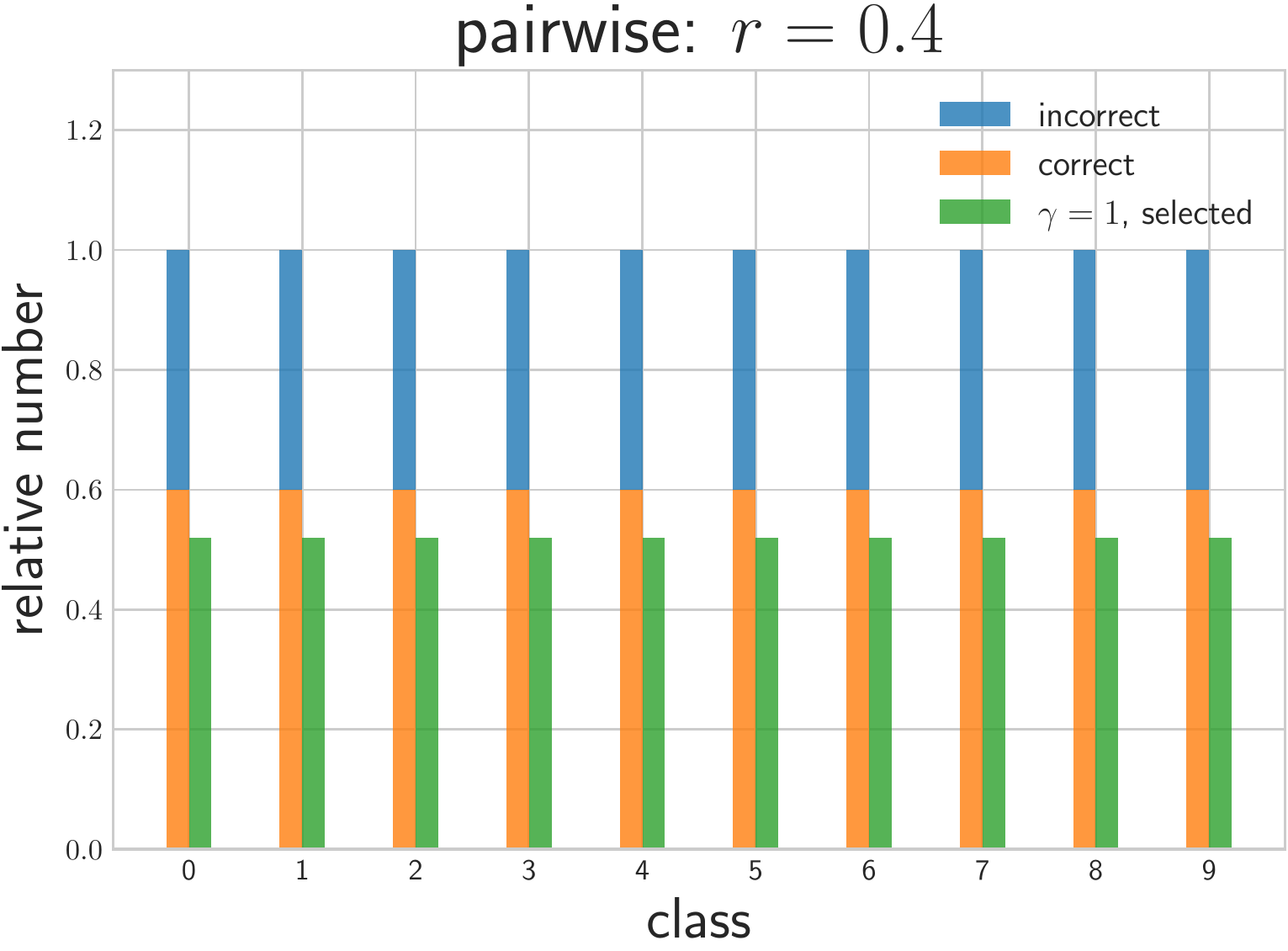} 
\label{fig:PN-num}
\end{minipage}
}
\subfigure[structured label noise ]{
\begin{minipage}[t]{0.318\linewidth}
\centering
\includegraphics[width=\linewidth]{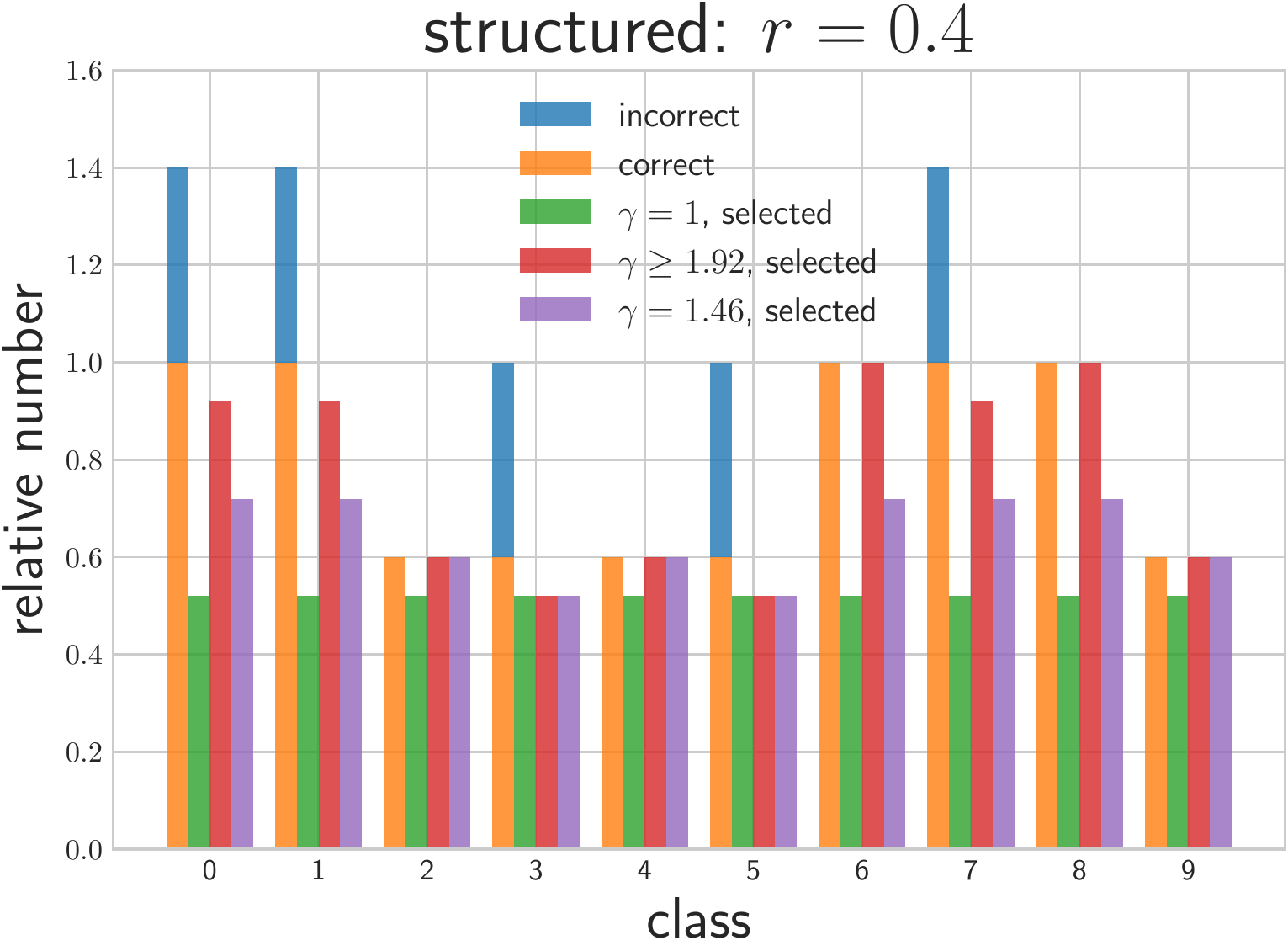} 
\label{fig:AN-num}
\end{minipage}
}
\caption{The selected relative number on noisy CIFAR-10 with $\beta =0.2$ for different $\gamma$.}
\label{fig:relative-num}
\end{figure*}

For uniform and pairwise label noise with uniform true class distribution (\ie, $p_i=p_j$, $\forall i,j$), we actually have $\gamma_1=\max_{1\le i\le c}\{\frac{\textit{prop}(i)\cdot n_i}{p_i\cdot m} \}=1=\gamma_0$, and the selected data by using $\textit{prop}(i)$ (equivalently, setting $\gamma=1$ for $num(i)$) already matches the true class distribution, \ie, $num(i)$ collapses to $\textit{prop}(i)\times n_i$. But for structured label noise, directly using $\textit{prop}(i)\times n_i$ will cause class distribution shift, which may cause serious class imbalance and influence the learning process. Taking structured label noise CIFAR-10 (Figure~1 (c)) for example, directly using $\textit{prop}(i)\times n_i$ 
 (equivalently, setting $\gamma \ge \gamma_1=\frac{1}{1-(1+\beta)r}$) will make the selected data deviates from the true class distribution. Setting $\gamma=\gamma_0=1$ will get unbiased selected data, but may waste many useful data. There should be a trade-off between the number of  selected data and the unbiasedness of the class distribution.
Table~\ref{tab:my-table} shows the relative number of the selected data for each class with different $\gamma$, where the relative number is defined as $\frac{\textit{num}(i)}{p_i\times(\sum_{j=1}^c n_j)}$.  To better present the effect of $\gamma$ on the selected data, we plot the relative number of the selected data for uniform noise ($r=0.5$), pairwise noise ($r=0.4$) and structured noise ($r=0.4$) with the default $\beta=0.2$ in Figure~\ref{fig:relative-num}.

The selected proportion is a hyperparameter in our method, which depends on noise rates. Although most methods based on the sample selection strategy~\cite{han2018co,yu2019does,wei2020combating} or loss correction~\cite{natarajan2013learning,patrini2017making} assume that the exact noise information is known, we may not know this in practice. Usually, we resort to some methods~\cite{patrini2017making,hendrycks2018using} or manually verify a small group of randomly selected samples~\cite{xiao2015learning} to estimate the noise information. 
Let $\hat{\eta}_i$ denote the estimation of $\eta_i$, there are two cases: If $\hat{\eta}_i > \eta_i$, we may select fewer but cleaner examples from noisy data; If $\hat{\eta}_i < \eta_i$,  by setting the selected proportion less than $1-\hat{\eta}_i$, we greatly diminish the damage of selecting incorrect examples. 

\begin{table*}[t]
\centering
\caption{The accuracy with different $\beta$ on CIFAR-10 with the default $\gamma$ and $\kappa$.}
\label{tab:beta-CIFAR10}
\resizebox{0.8\textwidth}{!}{%
\begin{tabular}{c|c|cccc|cccc|cccc}
\hline
\multicolumn{2}{c|}{CIFAR-10} & \multicolumn{4}{c|}{uniform noise ($r=0.5$)} & \multicolumn{4}{c|}{pairwise noise ($r=0.4$)} & \multicolumn{4}{c}{structured noise ($r=0.4)$} \\
\hline
\multicolumn{2}{c|}{$\beta$} & $0$ & $0.1$ & $0.2$ & $0.3$ & $0$ & $0.1$ & $0.2$ & $0.3$ & $0$ & $0.1$ & $0.2$ & $0.3$ \\
\hline
\multirow{2}{*}{RSL\_WM} & best & 92.75 & 93.24 & \textbf{93.38} & 93.34 & 85.23 & 87.66 & 89.27 & \textbf{90.77} & 90.83 & 91.01 & \textbf{91.17} &  90.40\\
 & last & 92.45  & 93.18 & \textbf{93.27} & 93.26 & 84.19 & 87.04 & 88.85 & \textbf{90.55} & 90.41 &\textbf{90.72}  & 90.63 & 90.02 \\
 \hline
\end{tabular}%
}
\end{table*}
\begin{table*}[b]
\centering
\caption{The accuracy with different $\beta$ on CIFAR-100 with the default $\gamma$ and $\kappa$.}
\label{tab:beta-CIFAR100}
\resizebox{0.65\textwidth}{!}{%
\begin{tabular}{c|c|cccc|cccc}
\hline
\multicolumn{2}{c|}{CIFAR-100} & \multicolumn{4}{c|}{uniform noise ($r=0.4$)} & \multicolumn{4}{c}{pairwise noise ($r=0.4$)} \\
\hline
\multicolumn{2}{c|}{$\beta$} & $0.0$ & $0.1$ & $0.2$ & $0.3$ & $0.0$ & $0.1$ & $0.2$ & $0.3$ \\
\hline
\multirow{2}{*}{RSL\_WM} & best & 71.20 & 71.28 & \textbf{71.51} & 70.86 & 51.96& 52.14& \textbf{53.25}  & 52.64\\
 & last & 70.70  & \textbf{70.84} & 70.69 & 70.71 & 50.54 & 51.27& \textbf{52.44}  & 52.24  \\
 \hline
\end{tabular}%
}
\end{table*}

\begin{table*}[b]
\centering
\caption{The accuracy with different $\gamma$ on structured noise CIFAR-10 with the default $\beta$ and $\kappa$.}
\label{tab:gamma}
\resizebox{0.75\textwidth}{!}{%
\begin{tabular}{c|c|ccc|ccc|ccc}
\hline
\multicolumn{2}{c|}{CIFAR-10} & \multicolumn{3}{c|}{structured noise ($r=0.2$)} & \multicolumn{3}{c|}{structured noise ($r=0.3$)} & \multicolumn{3}{c}{structured noise ($r=0.4)$} \\
\hline
\multicolumn{2}{c|}{$\gamma$} & 1.0 & $\frac{22}{19}$ & $\frac{25}{19}$ & 1.0 & $\frac{41}{32}$ & $\frac{25}{16}$ &  1.0 & $\frac{19}{13}$& $\frac{25}{13}$ \\
\hline
\multirow{2}{*}{RSL\_WM} & best & \textbf{93.15} & 93.12 & 92.04 & 92.74 & \textbf{92.78} & 91.37 & 90.79 &  \textbf{91.17} &  89.22 \\
 & last & \textbf{92.97} & 92.83 & 91.83 & 92.30 & \textbf{92.34} & 89.19 & 90.34 & \textbf{90.63} & 85.69 \\
 \hline
\end{tabular}%
}
\end{table*}

In practice, the exact values of noise rates are not known, inaccurate estimation may hurt the performance. Our method uses the parameters $\beta$ and $\gamma$ as slack variables to relieve the influence of the inaccurate estimation. Thus in our method, the study of the sensitivity of the performance to the estimated noise rate $\hat{\eta}_i$ can be converted into the study of the sensitivity of the performance to the hyperparameters $\beta$ and $\gamma$, since $\beta$ and $\gamma$ are two hyperparameters controlling the selected proportion. We consider $\beta \in \{0,0.1,0.2,0.3\}$ and $\gamma \in\{\gamma_0, (\gamma_0+\gamma_1)/2, \gamma_1\}$, where $\gamma_0 = 1$ and $\gamma_1=\max_{1\le i\le c}\{\frac{\textit{prop}(i)\cdot n_i}{p_i\cdot m}\}$. When analyzing one of the hyperparameters $\beta$, $\gamma$ and $\kappa$, we fix the other two as the default values. The default values for $\beta$, $\gamma$, and $\kappa$ are $0.2$, $(\gamma_0 +\gamma_1)/2$ and $-\log(0.7)$ respectively. 

Tables~\ref{tab:beta-CIFAR10} and~\ref{tab:beta-CIFAR100} show that in general, selecting less but cleaner data ($\beta>0$) can achieve better performance, and larger $\beta$ is needed when separating correct examples from incorrect ones is difficult (\eg, pairwise noise $r=0.4$ for CIFAR-10). But when $\beta$ is 
too large, many useful labels are wasted and the performance may decrease. In practice, $\beta=0.2$ may be a good choice.

Table~\ref{tab:gamma} shows that better performance can be achieved by considering the class distribution shift. When the data is relatively abundant (\eg, structured noise $r=0.2$), setting $\gamma=\gamma_0$ achieves better performance than setting $\gamma=(\gamma_0+\gamma_1)/2$.

\begin{table*}[t]
\centering
\caption{The accuracy with different $\kappa$ on CIFAR-10 with the default $\beta$ and $\gamma$.}
\label{tab:kappa-CIFAR10}
\resizebox{\textwidth}{!}{%
\begin{tabular}{c|c|cccc|cccc|cccc}
\hline
\multicolumn{2}{c|}{CIFAR-10} & \multicolumn{4}{c|}{uniform noise ($r=0.5$)} & \multicolumn{4}{c|}{pairwise noise ($r=0.4$)} & \multicolumn{4}{c}{structured noise ($r=0.4)$} \\
\hline
\multicolumn{2}{c|}{$\kappa$} & $0.0$ & $-\log(0.9) $ & $-\log(0.7)$ & $-\log(0.5)$ & $0.0$ & $-\log(0.9)$ & $-\log(0.7)$ & $-\log(0.5)$ & $0.0$ & $-\log(0.9)$ & $-\log(0.7)$ & $-\log(0.5)$ \\
\hline
\multirow{2}{*}{RSL\_WM} & best & 93.21 & 93.29 & \textbf{93.38} & 93.18 & 88.87 & 89.48 & 89.27 & \textbf{89.77} &  90.27& 90.79 & \textbf{91.17} & 90.25 \\
 & last & 92.95 & 93.17 & \textbf{93.27} & 93.16 & 88.52 & 89.13 & 88.85 & \textbf{89.30} & 90.03 &90.48  & \textbf{90.63} & 89.63 \\
 \hline
\end{tabular}%
}
\end{table*}

\subsection{The Hyperparameter $\kappa$}\label{sup:kappa}
Now we first briefly recap the standard MixMatch process~\cite{berthelot2019mixmatch} which is omitted in Section~4 of the main paper due to space limit. Given a batch of labeled data $\mathcal{L}=\{(\bm{x}_b, p_b)\}_{1\le b\le B}$ and a batch of unlabeled data $\mathcal U=\{\bm{u}_b\}_{1\le b\le B}$, MixMatch produces a batch of augmented labeled examples $\mathcal{L}'$ and a batch of augmented unlabeled examples $\mathcal{U}'$ with ``guessing'' labels by unifying data augmentation, ``label guessing'', ``sharpening'' and slightly modified MixUp~\cite{zhang2017mixup}. This process can be formulated as $(\mathcal{L}', \mathcal{U}') = \text{MixMatch}(\mathcal{L}, \mathcal{U}, T, K, \alpha)$, where hyperparameter $K$ represents the number of augmentations, $T$ represents the sharpening temperature and $\alpha$ represents the Beta distribution parameter for MixUp. Then, a standard cross-entropy loss is used on $\mathcal{L}'$, while the loss on $\mathcal{U}'$ is calculated with mean square error loss which is less sensitive to incorrect labels than cross-entropy loss, \ie, $L_\mathcal{L} = \frac{1}{|\mathcal{L}'|} \sum_{(\bm x, p)\in \mathcal{L}'} H(p, p_{\text{model}}(\hat y | \bm x))$ and $L_\mathcal{U} = \frac{1}{c|\mathcal{U}'|}\sum_{(\bm u, q)\in \mathcal{U}'}\|q-p_{\text{model}}(\hat y | \bm u)\|_2^2$. The overall loss $L$ is calculated as $L = L_\mathcal{L} +\lambda_{\mathcal U}L_\mathcal{U}$ and the final classifier is learned by optimizing $L$, where $\lambda_{\mathcal{U}}$ is the weight parameter for unsupervised loss term.

The selected samples with the ``small-loss'' criterion may still have noise, and thus reweighing examples is needed to relieve the damage of incorrect examples. 
Generally, large loss implies that the probability of the example having an incorrect label is relatively high, so we reweigh the selected examples to alleviate the influence of label noise. 
The weight for an example $(\bm{x}, \tilde{y})$ is calculated as:
\[w(\bm x, \tilde{y}) = \exp\big(-\kappa\frac{\bar{\ell}(\bm x,\tilde{y}) -\ell_*(i)}{\ell^*(i) - \ell_*(i)}\big),\]
where $\kappa\ge 0$ is a hyperparameter. Let $t \triangleq \frac{\bar{\ell}(\bm x,\tilde{y}) -\ell_*(i)}{\ell^*(i) - \ell_*(i)} \in [0,1]$, the weight $w = \exp(-\kappa t)$ is shown in Figure~\ref{fig:kappa} for $\kappa \in \{0.0$, $-\log(0.9)$, $-\log(0.7)$, $-\log(0.5)$, $-\log(0.3)$, $-\log(0.1)\}$. 
When setting $\kappa=0.0$,  it means that we think all the selected examples are reliable and assign equal weights for all examples. When setting $\kappa>0$, the examples with larger mean loss will be assigned smaller weights. We embed the weight into MixMatch with weighted resampling technique and call it Weighted\_MixMatch. Denote $D_u = \{\bm x|\,\forall\, (\bm x,\tilde{y})\in \tilde{D}\backslash D_\text{sel}\}$, the above process can be formulated as $D_\text{sel\_WM} = \text{Weighted\_MixMatch}(D_\text{sel}, D_u)$. Then we name the method of training $g(\bm x;\Theta)$ with $D_\text{sel\_WM}$ by Weight\_MixMatch rather than $D_\text{sel}$ as RSL\_WM. 
We adopt the default hyper-parameters of standard MixMatch and additionally analyze the influence of $\kappa$ in experiments.

\begin{figure}[H] 
\centering
\includegraphics[width=0.6\linewidth]{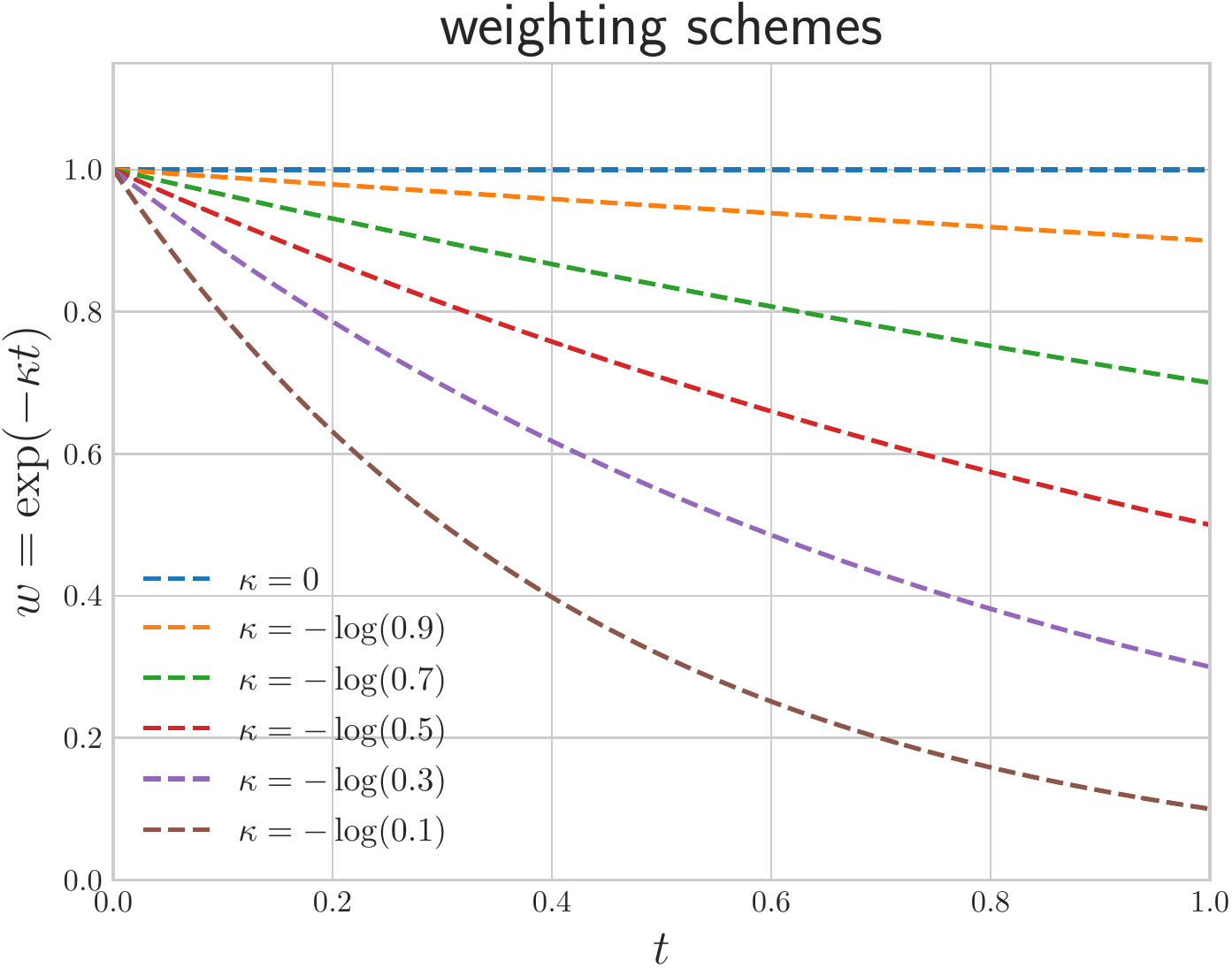}
\caption{Weighting schemes with different $\kappa$ values.}
\label{fig:kappa}
\end{figure}

Tables~\ref{tab:kappa-CIFAR10} and~\ref{tab:kappa-CIFAR100} show that better performance can be achieved by reweighing the selected examples ($\kappa>0$), and larger $\kappa$ is needed when the precision of the selected data is relatively low (\eg, pairwise noise $r=0.4$ for CIFAR-10 and CIFAR-100). In practice, $\kappa=-\log(0.7)$ may be a good choice.

\begin{table*}[t] 
\centering 
\caption{The accuracy with different $\kappa$ on CIFAR-100 with the default $\beta$ and $\gamma$.}
\label{tab:kappa-CIFAR100}
\resizebox{0.7\textwidth}{!}{%
\begin{tabular}{c|c|cccc|cccc}
\hline
\multicolumn{2}{c|}{CIFAR-100} & \multicolumn{4}{c|}{uniform noise ($r=0.4$)} & \multicolumn{4}{c}{pairwise noise ($r=0.4$)} \\
\hline
\multicolumn{2}{c|}{$\kappa$} & $0.0$ & $-\log(0.9) $ & $-\log(0.7)$ & $-\log(0.5)$ & $0.0$ & $-\log(0.9)$ & $-\log(0.7)$ & $-\log(0.5)$ \\
\hline
\multirow{2}{*}{RSL\_WM} & best & 71.33 & 71.47 & \textbf{71.51} & 71.48 & 52.47 & 53.05 & 53.25 & \textbf{53.26} \\
 & last & 70.28 & 70.63 & 70.69 & \textbf{71.01} & 52.16 & 52.13 & 52.44 & \textbf{52.81} \\
 \hline
\end{tabular}%
}
\end{table*}

\end{appendices}

\newpage
\mbox{}
\newpage

\small 
\bibliographystyle{named}
\bibliography{my_egbib}

\end{document}